\theoremstyle{plain}
\newtheorem{theorem}{Theorem}[section]
\newtheorem{lemma}[theorem]{Lemma}
\theoremstyle{definition}
\newtheorem{definition}[theorem]{Definition}
\newtheorem{assumption}[theorem]{Assumption}
\theoremstyle{remark}
\icmltitlerunning{Robust Multi-Task Learning with Excess Risks}
\begin{document}

\twocolumn[
\icmltitle{Robust Multi-Task Learning with Excess Risks}



\icmlsetsymbol{equal}{*}

\begin{icmlauthorlist}
\icmlauthor{Yifei He}{uiuc}
\icmlauthor{Shiji Zhou}{thu}
\icmlauthor{Guojun Zhang}{waterloo}
\icmlauthor{Hyokun Yun}{amazon}
\icmlauthor{Yi Xu}{amazon}
\icmlauthor{Belinda Zeng}{amazon}
\icmlauthor{Trishul Chilimbi}{amazon}
\icmlauthor{Han Zhao}{uiuc,amazon}
\end{icmlauthorlist}

\icmlaffiliation{uiuc}{Department of Computer Science, University of Illinois Urbana-Chamapign, Urbana, IL, USA}
\icmlaffiliation{amazon}{Amazon, Seattle, WA, USA}
\icmlaffiliation{waterloo}{School of Computer Science, University of Waterloo, Waterloo, ON, Canada}
\icmlaffiliation{thu}{Department of Automation, Tsinghua University, Beijing, China}

\icmlcorrespondingauthor{Yifei He}{yifeihe3@illinois.edu}
\icmlcorrespondingauthor{Han Zhao}{hanzhao@illinois.edu}

\icmlkeywords{Machine Learning, ICML}

\vskip 0.3in
]



\printAffiliationsAndNotice{}  

\begin{abstract}
 Multi-task learning (MTL) considers learning a joint model for multiple tasks by optimizing a convex combination of all task losses. To solve the optimization problem, existing methods use an adaptive weight updating scheme, where task weights are dynamically adjusted based on their respective losses to prioritize difficult tasks. However, these algorithms face a great challenge whenever \textit{label noise} is present, in which case excessive weights tend to be assigned to noisy tasks that have relatively large Bayes optimal errors, thereby overshadowing other tasks and causing performance to drop across the board. To overcome this limitation, we propose \textbf{M}ulti-\textbf{T}ask \textbf{L}earning with \textbf{Excess} Risks (ExcessMTL), an excess risk-based task balancing method that updates the task weights by their distances to convergence instead. Intuitively, ExcessMTL assigns higher weights to worse-trained tasks that are further from convergence. To estimate the excess risks, we develop an efficient and accurate method with Taylor approximation. Theoretically, we show that our proposed algorithm achieves convergence guarantees and Pareto stationarity. Empirically, we evaluate our algorithm on various MTL benchmarks and demonstrate its superior performance over existing methods in the presence of label noise. Our code is available at \href{https://github.com/yifei-he/ExcessMTL}{https://github.com/yifei-he/ExcessMTL}. \looseness=-1
\end{abstract}

\section{Introduction}

Multi-task learning (MTL) aims to train a single model to perform multiple related tasks~\citep{caruana1997multitask}. Due to the nature of learning multiple tasks simultaneously, the problem is often tackled by aggregating multiple objectives into a scalar one via a convex combination. Despite various efforts to achieve more balanced training, a crucial aspect that often remains overlooked is robustness to \textit{label noise}. Label noise is ubiquitous in real-world MTL problems as the tasks are drawn from diverse sources, introducing variations in data quality~\citep{Hsieh_Tseng_2021,Burgert_2022}. The presence of label noise can negatively impact the performance of all tasks trained jointly, leading to sub-optimal performance across the board. Addressing this issue is essential for the robust and reliable deployment of MTL models in real-world scenarios. 

In this work, we study label noise in MTL by delving into a practical scenario where \textit{one or more tasks are contaminated by label noise} due to the heterogeneity of data collection processes. 
Under label noise, scalarization (static weighted combination) is not robust because it overlooks the dataset quality. In fact, scalarization is prone to overfitting to a subset of tasks so that it cannot achieve a balanced solution among tasks, especially for under-parametrized models~\citep{hu2023revisiting}. Similarly, existing adaptive weight updating methods are vulnerable to label noise. These methods aim at prioritizing difficult tasks during training, and the difficulty is typically measured by the magnitude of each task loss~\citep{chen2018gradnorm,liu2019end,sagawadistributionally,liutowards}. Namely, they assign higher weights to the tasks with higher losses. However, the high loss may stem from label noise rather than insufficient training. For instance, if a task has noise in labels, its loss will be high, yet it provides no informative signal for the learning process. 

\begin{figure}[t!]
    \centering
    \includegraphics[width=0.75\linewidth]{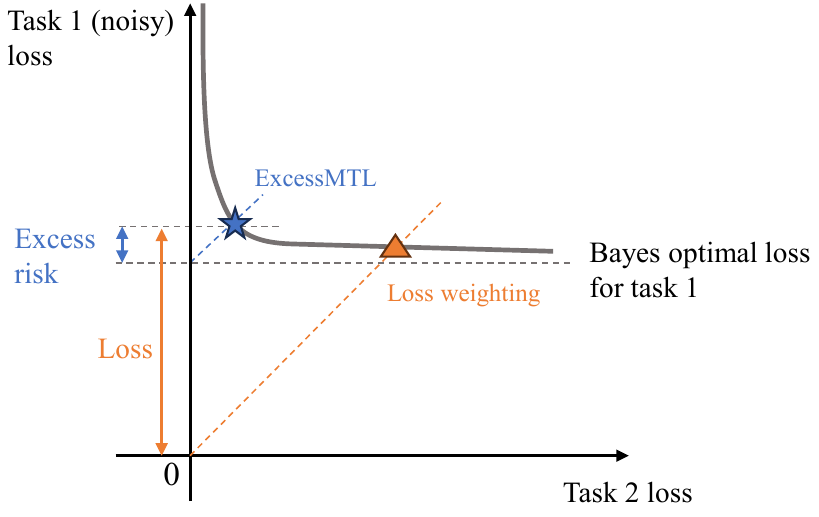}
    \caption{Conceptual comparison between ExcessMTL and loss weighting methods. The figure shows a two-task MTL setting, where Task 1 contains label noise, while Task 2 does not. Thus, the Bayes optimal loss (dashed line) for Task 1 is non-zero. The curve represents the Pareto front, i.e., all points on the curve are Pareto optimal. Loss weighting methods aim to find the solution with equal losses for two tasks, severely sacrificing the performance of Task 2. On the other hand, ExcessMTL finds the solution with equal excess risks, striking a better balance between the two tasks.}
    \label{fig:er_loss}
    \vspace{-0.3cm}
\end{figure}

To address this challenge, we propose Multi-Task Learning with Excess Risks (ExcessMTL), which is robust to label noise while retaining the benefit of prioritizing worse-trained tasks. It dynamically adjusts the task weights based on their distance to convergence. Specifically, we define the distance to be excess risks, which measure the gap of loss between the current model and the optimal model in the hypothesis class. In the presence of different noise levels among tasks, the converged task-specific losses are likely to differ and are not guaranteed to be low. On the other hand, with proper training, excess risks for all tasks will approach 0. Thus, excess risks provide the true improvement ceiling achievable through model training or refinement, making it naturally robust to label noise by definition. The advantage of excess risks over losses as a difficulty measure is demonstrated in \Cref{fig:er_loss}. Since the optimal loss is usually intractable to compute, we propose an efficient method to estimate excess risks via Taylor approximation. 

At a high level, ExcessMTL iteratively executes the following steps until convergence: i) estimate excess risk for each task, ii) update task weights based on their respective excess risks, iii) perform a gradient update using the weighted sum of losses. The ability to identify the convergence distance leads to robustness in the presence of label noise. Even if one or multiple tasks are highly corrupted by label noise, the overall performance will not be compromised.

Theoretically, we derive convergence guarantees for our algorithm and establish connections with multi-objective optimization, proving that the solutions of ExcessMTL are Pareto stationary. Empirically, we evaluate our method on various MTL benchmarks, showing that it outperforms existing adaptive weighting methods in the presence of label noise, even in cases of extreme noise in one or a few tasks. Our results highlight the robustness of excess risk-based weighting methods, especially when label noise is a concern. Beyond MTL, our insights on the connection between excess risks and convergence distance can potentially inspire more robust algorithmic design for applications using loss weighting methods.



\section{Preliminaries}
\subsection{Excess Risks}
Consider predicting the label $y\in\cY$ from the input data $x\in\cX$. Let the data be drawn from some distribution $P$. Given the per-sample loss function $\mathcal{L}$, the risk (or expected loss) of a model $\theta$ from the model family $\Theta$ is given by
\begin{align*}
    \ell(\theta)=\E_{(x,y)\sim P}[\mathcal{L}(\theta;(x,y))].
\end{align*}
The risk can be further decomposed as follows
\begin{align*}
    \ell(\theta)=\underbrace{\underbrace{\ell(\theta)-\ell(\theta_\Theta^*)}_\text{Estimation error}+\underbrace{\ell(\theta_\Theta^*)-\ell(\theta^*)}_\text{Approximation error}}_{\text{Excess risk} (\cE)}+\underbrace{\ell(\theta^*)}_\text{Bayes error},
\end{align*}
where $\theta_\Theta^*=\argmin_{\theta\in\Theta}\E_{(x,y)\sim P}[\mathcal{L}(\theta;(x,y))]\nonumber$ is the optimal model in the model family $\Theta$ and $\theta^*=\argmin_\theta\E_{(x,y)\sim P}[\mathcal{L}(\theta;(x,y))]$ is the optimal model in any model family. The combination of estimation error and approximation error is the excess risk, denoted as $\cE$. When the function class $\cF$ is expressive enough, the approximation error approaches 0. The Bayes error is irreducible due to the stochasticity in the data generating process (e.g., label noise). For instance, if the data generating process is non-deterministic, one data point can have non-zero probability of belonging to multiple classes. This is an inherent property of a dataset, where high label noise leads to high Bayes error. On the other hand, the excess risk captures the difference between the risks of the model and the Bayes optimal model, effectively removing the influence of label noise. Therefore, it can be viewed as a measure of distance to optimality. \looseness=-1


The above decomposition of risk highlights that it is not an appropriate criterion for evaluating model performance because it incorporates the irreducible error, which heavily depends on the noise level in the dataset. Since we typically do not have control over the dataset quality, excess risk is a more robust measure of model performance, particularly in the presence of label noise. It allows us to focus on the part of risk that can be improved through model learning and optimization, making it a reliable metric for assessing the effectiveness of models.

\subsection{Multi-Task Learning}
In MTL, $m \geq 2$ tasks are given. We use $\alpha_i$ to represent the weight of the $i$-th task and $\Delta_m$ to denote the $(m-1)$-dimensional probability simplex. We study the setting of hard parameter sharing, where a subset of model parameters ($\theta_{sh}$) is shared across all tasks, while other parameters ($\theta_i$) are task-specific. The goal of MTL is to find the parameter $\theta_{sh}$ and $\theta_{i}$ that minimizes a convex combination of all task-specific losses ($\ell_i$)
\vspace{-0.3cm}
\begin{align}\label{mtl-obj}
    \min_{\theta_{sh}, \theta_1,\cdots,\theta_m} \sum_{i=1}^m \alpha_i\ell_i(\theta_{sh}, \theta_i),
\end{align}
where $\alpha_i \in \Delta_m$. The weights can be either static~\citep{liu2021conflict,yu2020gradient} or dynamically computed~\citep{chen2018gradnorm,liu2019end,liutowards,nashmtl}.

Despite the wide usage of the weighted combination scheme, it is difficult to define optimality under the MTL setting because a model may work well on some tasks, but perform poorly on others. For more rigorous optimality analysis, MTL can be formulated as a multi-objective optimization problem~\citep{sener2018multi,zhou2022convergence}, where two models can be compared by Pareto dominance.
\begin{definition}[Pareto dominance]
    Let $L(\theta)=\{\ell_i(\theta):i\in[m]\}$ be a set of loss functions. For two parameter vectors $\theta_1$ and $\theta_2$, if $\ell_i(\theta_1)\leq\ell_i(\theta_2)$ for all $i\in[m]$ and $L(\theta_1)\neq L(\theta_2)$, we say that $\theta_1$ Pareto dominates $\theta_2$ with the notation $\theta_1 \prec \theta_2$.
\end{definition}
The goal of multi-task learning as multi-objective optimization is to achieve Pareto optimality.
\begin{definition}[Pareto optimal]\label{def:pareto-optimal}
A parameter vector $\theta^*$ is Pareto optimal if there exists no parameter $\theta$ such that $\theta \prec \theta^*$.
\end{definition}
There may exist multiple Pareto optimal solutions and they consist the Pareto set. A weaker condition is called Pareto stationary and all Pareto optimal points are Pareto stationary.
\begin{definition}[Pareto stationary]
    A parameter $\theta$ is called Pareto stationary if there exists $\alpha \in \Delta_{m}$ such that $\sum_{i=1}^m \alpha_i\nabla_\theta \ell_i(\theta)=0$.
\end{definition}

\section{Multi-Task Learning with Excess Risks}

We now introduce our main algorithm. In \Cref{motivation}, we begin by identifying a key limitation of previous MTL algorithms and outlining our objectives. In \Cref{algo}, we provide a detailed description of the algorithm. In \Cref{conceptual_comparison}, we make a conceptual comparison between our proposed algorithm with existing task weighting methods. Finally, in \Cref{theory}, we present a theoretical analysis of the convergence guarantee and Pareto stationarity of our algorithm. \looseness=-1

\subsection{Motivations and Objectives} \label{motivation}

Prior works have demonstrated that effective multi-task learning requires task balancing, i.e., tasks that the current model performs poorly on should be assigned higher weights during training~\citep{guo2018dynamic,kendall2018multi}. One popular method is loss balancing, which assigns high weights to the tasks with high losses, in the hope of prioritizing difficult tasks and making the training more balanced. However, we argue that task-specific loss is not a good criterion for task difficulty, as it not only considers model training, but is also subject to the quality of the dataset, which we may not have prior knowledge about. For instance, the high loss may not necessarily stem from insufficient training, but from high label noise. In that case, assigning high weights to the noisy tasks can hurt the multi-task performance across the board.

To address this problem, we propose to use excess risks to weigh the tasks because it measures the true performance gap that can be closed by model training. We hope to assign high weights to tasks with high excess risks such that all tasks converge at similar rates. To achieve this goal, we solve the min-max problem
\begin{align} \label{og_obj}
    \min_{\theta_{sh},\theta_1,\cdots\theta_m} \max_{i\in[m]}~\cE_i(\theta_{sh}, \theta_i),
\end{align}
where $\cE_i$ is the excess risk of the $i$-th task. Note that the formulation has the same form as Tchebycheff scalarization function, whose solution is able to fully explore the Pareto front of multiple objectives~\cite{bowman1976relationship,steuer1983interactive}, a property lacking in linear scalarization~\citep{hu2023revisiting}.


\begin{algorithm}[t!]
\caption{ExcessMTL}\label{algo:main}
\begin{algorithmic} 
\STATE \textbf{Input:} Step size $\eta_\alpha$, $\eta_\theta$, number of total tasks $m$
\STATE Initialize $\theta_{sh}^{(1)}$ and $\theta_i^{(1)}$ for all $i\in[m]$, $\alpha^{(1)}=[1/m, \cdots, 1/m]$
\FOR{$t=1,2 \cdots$}
    \FOR{$i=1, \cdots, m$}
        \STATE Compute gradient $g_i^{(t)}=\nabla_{\theta_{sh}}\ell_i^{(t)}(\theta_{sh}, \theta_i)$\\
        \STATE Compute excess risks with Eq. \ref{eq:excess} and Eq. \ref{eq:hessian}\\
        \STATE $\hat{\cE}_i^{(t)}={g_i^{(t)}}^\top {\text{diag}  \left( \sum_{\tau=1}^t g_i^{(\tau)} {g_i^{(\tau)}}^\top  \right)^{-1/2}} g_i^{(t)}$ \\
        \STATE Update weights $\alpha_i^{(t+1)}=\alpha_i^{(t)}\exp \left( \eta_\alpha \hat{\cE}_i^{(t)} \right)$\\
        \STATE Update task-specific parameters \\
        \STATE $\theta^{(t+1)}_i  \leftarrow \theta^{(t)}_i - \eta_\theta \nabla_{\theta_i}\ell_i^{(t)}(\theta_{sh}, \theta_i)$ 
    \ENDFOR
    \STATE Normalize $\alpha_i^{(t+1)}  \leftarrow \alpha_i^{(t+1)} / \sum_j \alpha_j^{(t+1)}$ for all $i$ 
    \STATE Update shared parameters
    \STATE $\theta^{(t+1)}_{sh}  \leftarrow \theta^{(t)}_{sh} - \eta_\theta\sum_i \alpha_i^{(t+1)}\nabla_{\theta_{sh}}\ell_i^{(t)}(\theta_{sh}, \theta_i)$  
\ENDFOR
\end{algorithmic}
\end{algorithm}

\subsection{Algorithm} \label{algo}
We present our algorithm in \Cref{algo:main}, which consists of three main components: excess risk estimation, multiplicative weight update, and scale processing.

\textbf{Excess risks estimation.}~For the ease of presentation, we overload the expression $\theta_i$ as a combination of $\theta_{sh}$ and $\theta_i$ for task $i$. In the computation of excess risks, the Bayes optimal loss is generally intractable to compute exactly, so we propose to use a local approximation instead. Specifically, we use the second-order Taylor expansion of the task-specific loss $\ell_i$ at the current parameter $\theta_i^{(t)}$ 
\begin{align}
    \ell_i&(\theta) = \ell_i(\theta_i^{(t)})+(\theta-\theta_i^{(t)})^\top g_i^{(t)} \nonumber \\
    &+ \frac{1}{2} (\theta-\theta_i^{(t)})^\top H_i^{(t)} (\theta-\theta_i^{(t)})+\cO(\|\theta - \theta_i^{(t)}\|_2^3), \label{eq:taylor}
\end{align}        
where $g_i^{(t)}$ is the gradient and $H_i^{(t)}$ is the Hessian matrix of $\ell_i$ at $\theta_i^{(t)}$. Plugging the locally optimal parameter $\theta_i^*$ in Eq. \ref{eq:taylor}, we can estimate the excess risk as 
\begin{align}
    &\cE_i(\theta_i^{(t)}) \approx \ell_i(\theta_i^{(t)}) - \ell_i(\theta_i^*) \nonumber\\ 
    \approx &(\theta_i^{(t)}-\theta_i^*)^\top g_i^{(t)} - \frac{1}{2} (\theta_i^{(t)}-\theta_i^*)^\top H_i^{(t)} (\theta_i^{(t)}-\theta_i^*). \label{eq:second}
\end{align}
To obtain the difference between $\theta_i^{(t)}$ and $\theta_i^*$, we use the fact that $\theta_i^*$ is locally optimal,
\begin{align}
    \nabla_{\theta_i^*} \ell_i(\theta_i^*) &\approx g_i^{(t)}+H_i^{(t)}(\theta_i^*-\theta_i^{(t)})= 0 \nonumber\\
    \implies\theta_i^{(t)}-\theta_i^*&\approx{H_i^{(t)}}^{-1}g_i^{(t)}. \label{eq:distance}
\end{align}
Plugging Eq.~\ref{eq:distance} into Eq.~\ref{eq:second} and assuming the second-order partial derivative is continuous, we have
\begin{align}
    \cE_i(\theta_i^{(t)}) &\approx \frac{1}{2}{g_i^{(t)}}^\top {H_i^{(t)}}^{-1} g_i^{(t)}. \label{eq:excess}
\end{align}
The factor $1/2$ can be dropped for simplicity. However, the computation of the Hessian matrix is generally intractable, so we use the diagonal approximation of empirical Fisher~\citep{Amari1998NaturalGW} to estimate it, which computes a diagonal matrix through the accumulation of outer products of historical gradients. This approach has shown to be useful in various practical settings~\citep{duchi2011adaptive,adam}. Specifically, let $g_i^{(\tau)}$ be the gradient of the $i$-th task with respect to the model parameter $\theta_i^{(\tau)}$ at training step $\tau$, the approximate Hessian at training step $t$ for the $i$-th task is
\begin{align}
    {H^{(t)}_i} \approx \text{diag}  \left( \sum_{\tau=1}^t g_i^{(\tau)} {g_i^{(\tau)}}^\top  \right)^{1/2}, \label{eq:hessian}
\end{align}
where $\text{diag}(\cdot)$ is a diagonal matrix. The estimation is efficient as the computational complexity is $\cO(d)$, where $d$ is the dimension of parameters.

\textbf{Multiplicative weight update.}~ After estimating the excess risks, we update the task weights accordingly. As the gradients contain stochastic factors during the training, this online learning process calls for the stability of $\alpha$ for algorithmic convergence~\citep{hazan2019introduction}. The one-hot solution in \Cref{main-obj} suffers from fluctuation and cannot be directly applied. Following the framework of online mirror descent with entropy regularization that formulates KL divergence as Bregman divergence~\citep{hazan2019introduction}, we smooth out the hard choice of a single task with a weight vector $\alpha\in\Delta_m$, so problem \ref{og_obj} can be reformulated as
\vspace{-0.1cm}
\begin{align}\label{main-obj}
    \min_{\theta_{sh},\theta_1,\cdots,\theta_m} \max_{\alpha\in\Delta_m} \sum_{i=1}^m \alpha_i\cE_i(\theta_{sh}, \theta_i).
\end{align}
Here we find the task weights $\alpha^{(t)}$ by
\begin{align*}
    \alpha^{(t+1)} =& \argmax_{\alpha\in\Delta_m} \sum_{i=1}^m \alpha_i\cE_i(\theta_{sh}^{(t)}, \theta_i^{(t)})
    -\frac{1}{\eta_\alpha} \text{KL} (\alpha\|\alpha^{(t)}),
\end{align*}
which can easily be proven as equivalent to
\begin{align}
    \alpha_i^{(t+1)}= \frac{\alpha_i^{(t)}\exp \left( \eta_\alpha \cE_i(\theta_{sh}^{(t)},\theta_i^{(t)})  \right)}{\sum_{j=1}^m \alpha_j^{(t)}\exp \left( \eta_\alpha \cE_j(\theta_{sh}^{(t)},\theta_j^{(t)})  \right)},
\end{align}
where $\eta_\alpha$ is the step size for weight update. Note that this update can be viewed as an exponentiated gradient~\citep{kivinen1997exponentiated} step on the convex combination of excess risks. 
Based on the updated task weights, we compute the loss by taking a convex combination of each task-specific loss and backpropagating the gradient. Multiplicative weight update provides stability in training and we show the convergence guarantees in \Cref{theory}. 

\textbf{Scale processing.}~
Similar to loss and gradient, the excess risk is also sensitive to the scale of tasks. The issue can manifest in two scenarios. The first one is when the type of loss is the same across all tasks, but the input data for each task varies in magnitude. A simple solution is to standardize all input data such that they have zero mean and unit variance. The second case is when the tasks employ different losses. For instance, the cosine loss has a maximum value of 1, while the squared loss can be unbounded. To ensure a fair comparison among tasks with varying losses, we propose to compute the relative excess risks. We adopt a similar method from~\citet{chen2018gradnorm} to normalize excess risks by dividing the current value by the initial value, ensuring a range from 0 to 1. 



\subsection{Conceptual Comparison} \label{conceptual_comparison}

In this section, we discuss the relationships and distinctions between our approach and prior task-balancing methods conceptually. We demonstrate the limitations of previous methods, especially in the presence of label noise. Empirical verification of the arguments is presented in \Cref{emp_comp}.

\textbf{GradNorm}~\citep{chen2018gradnorm}~enforces similar gradient norms and training rates across all tasks. The training rate is defined as the ratio of the current and the initial loss. It tends to favor tasks with small gradient norms, overlooking inherent scale differences in task gradients. In the face of label noise, noisy tasks inevitably exhibit a slow training rate due to their high losses. However, GradNorm treats this as insufficient training and assigns high weights to the noisy tasks. Additionally, the task weights require additional gradient updates as they are learnable parameters in GradNorm, whereas our algorithm does not. 

\textbf{MGDA}~\citep{sener2018multi}~formulates multi-task learning as multi-objective optimization. They solve it by using the classical multiple gradient descent algorithm~\citep{mukai,FliegeSvaiter00,desideri2012multiple}, which finds the minimum norm within the convex hull of task gradients. It also tends to favor tasks with small gradient magnitudes due to the nature of the Frank-Wofle algorithm it employs. Consider two tasks with gradients $\|g_1\|_2>\|g_2\|_2$ where the projection of $g_1$ on $g_2$ has a larger norm than $g_2$. In this case, MGDA will concentrate all weights on task 2. This observation holds in general beyond the two-task example. As the loss landscape of a noisy task tends to be flatter, its gradient magnitude will be smaller, thus favored by MGDA. Moreover, the Frank-Wofle algorithm requires pairwise computation among all tasks, which is prohibitive under large number of tasks.

\textbf{GroupDRO} \citep{sagawadistributionally}~is initially designed to address the problem of subpopulation shift and has since been extended to multi-task learning~\citep{michel2021balancing}. It aims to optimize the worst task loss by assigning high weights to tasks with high losses. However, in the presence of label noise, tasks with high noise levels will exhibit persistently high losses, which leads GroupDRO to assign excessive weight to those tasks, thereby ignoring the other tasks and causing an overall performance decline.

\textbf{IMTL} \citep{liutowards} achieves impartial learning by requiring the gradient update to have equal projection on each task. In a two-task scenario, this is equivalent to finding the angle bisector for the task-specific gradients, meaning that regardless of the gradient magnitude, each task exerts an equal influence on determining the final gradient direction. However, when confronted with label noise, the noisy gradient can substantially distort the final gradient direction, damping or even misguiding the training.


\subsection{Theoretical Analysis} \label{theory}

In this section, we analyze ExcessMTL theoretically, and verify its soundness with three key properties: ExcessMTL i) converges at the same rate as MGDA and GroupDRO (\Cref{thm:convergence}), ii) converges to Pareto optimal solutions in convex cases (\Cref{co:pareto-optimal}), iii) converges to Pareto stationary solutions in non-convex cases (\Cref{thm:convergence_nonconvex}).

\Cref{algo:main} takes inspiration from the online mirror descent algorithm~\citep{nemirovskij1983problem}, where the update of parameter $\theta$ and weights $\alpha$ corresponds to online gradient descent and online exponentiated gradient respectively. With established results~\citep{nemirovski2009robust} in the online learning literature, we show that \Cref{algo:main} converges at the rate $\cO(1/\sqrt{t})$, same as MGDA and GroupDRO.
\begin{restatable}[Convergence]{theorem}{convergence}\label{thm:convergence}
    Suppose (i) each task-specific loss $\ell_i$ is L-Lipschitz, (ii) $\ell_i$ is convex on the model parameter $\theta$, (iii) $\ell_i$ bounded by $B_\ell$ and (iv) $\|\theta\|_2$ is bounded by $B_\theta$. At training step $t$, let $\bar{\theta}^{(1:t)}\coloneqq \frac{1}{t}\sum_{\tau=1}^t{\theta^{(\tau)}}$, then
    \begin{align}
        &\E\left[\sum_{i=1}^m \alpha_i^{(t)}\cE_i(\bar{\theta}^{(1:t)})\right] - \min_{\theta} \max_{\alpha\in\Delta_m} \sum_{i=1}^m \alpha_i\cE_i(\theta) \nonumber\\
        \leq &2m\sqrt{\frac{10(B_\theta^2 L^2 + B_\ell^2 \log m)}{t}},
    \end{align}
    where $m$ is the number of tasks.
\end{restatable}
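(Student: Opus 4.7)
The plan is to recognize \Cref{algo:main} as a stochastic primal-dual algorithm for the convex-concave saddle point in \Cref{main-obj}, and assemble the bound from standard no-regret guarantees for its two players. Define $F(\theta,\alpha) \coloneqq \sum_{i=1}^m \alpha_i \cE_i(\theta)$. Each $\cE_i$ differs from $\ell_i$ only by an additive constant (the Bayes error), so assumption (ii) gives that $F$ is convex in $\theta$, while $F$ is linear---hence concave---in $\alpha\in\Delta_m$. Crucially, $\nabla_\theta \cE_i = \nabla_\theta \ell_i$, so the $\theta$-update in the algorithm is exactly a (stochastic) subgradient step on $F(\cdot,\alpha^{(t)})$, and the multiplicative weight update is precisely the exponentiated-gradient step on $F(\theta^{(t)},\cdot)$.

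Next I would invoke standard regret bounds for each player. For the primal side, assumptions (i) and (iv) say that subgradients of the weighted loss have norm at most $L$ and the parameter set has diameter at most $2B_\theta$, so online stochastic gradient descent with step size $\eta_\theta = \Theta(B_\theta/(L\sqrt{t}))$ yields, in expectation,
\begin{align*}
    \sum_{\tau=1}^t F(\theta^{(\tau)},\alpha^{(\tau)}) - \min_\theta \sum_{\tau=1}^t F(\theta,\alpha^{(\tau)}) \leq c_1 B_\theta L\sqrt{t}.
\end{align*}
For the dual side, assumption (iii) bounds each $\cE_i$ by $B_\ell$, so entropy-regularized mirror descent on $\alpha$ over $\Delta_m$ with step size $\eta_\alpha = \Theta(\sqrt{\log m}/(B_\ell\sqrt{t}))$ yields the classical Hedge regret
\begin{align*}
    \max_{\alpha\in\Delta_m}\sum_{\tau=1}^t F(\theta^{(\tau)},\alpha) - \sum_{\tau=1}^t F(\theta^{(\tau)},\alpha^{(\tau)}) \leq c_2 B_\ell\sqrt{t\log m}.
\end{align*}
Both are standard (see \citet{nemirovski2009robust,hazan2019introduction}).

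Summing the two regrets, dividing by $t$, and applying Jensen's inequality to the convex-concave $F$ at the Ces\`aro-averaged iterates converts the combined regret into the duality gap of the saddle problem in \Cref{main-obj}, which is exactly the quantity on the left-hand side of \Cref{thm:convergence}. Balancing the two step sizes places $B_\theta L$ and $B_\ell\sqrt{\log m}$ together under a single radical, and the $\ell_1$-to-$\ell_2$ bookkeeping across the $m$ coordinates of $\alpha$ produces the $2m$ prefactor in the stated bound; taking expectation over the stochastic gradients closes the argument.

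The hard part will be twofold. First, handling the stochasticity of both the subgradient estimates and the excess-risk surrogates $\hat\cE_i$ so that the regret inequalities still hold in expectation: the surrogates $\hat\cE_i$ appear only inside the dual update, which is robust because the Hedge analysis only asks for bounded gains, not an unbiased Hessian. Second, tracking the constants carefully enough to produce the precise $\sqrt{10(B_\theta^2 L^2 + B_\ell^2 \log m)/t}$ with the $2m$ factor, rather than a looser $\mathcal{O}$-style bound; this requires picking $\eta_\theta$ and $\eta_\alpha$ exactly so that the two additive terms merge into one square root. A minor but essential subtlety is justifying that the $\theta$-update is a subgradient step for $F(\cdot,\alpha^{(t)})$ even though the algorithm never evaluates $\cE_i$ in the primal update, which rests on the identity $\nabla_\theta \cE_i = \nabla_\theta \ell_i$ established in the setup.
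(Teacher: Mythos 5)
Your proposal takes essentially the same route as the paper: both view \Cref{algo:main} as stochastic online mirror descent on the convex--concave saddle problem of \Cref{main-obj} and obtain the bound from the regret guarantee of \citet{nemirovski2009robust}, the only presentational difference being that the paper invokes the combined saddle-point theorem as a black box while you assemble it from the two players' individual regrets. The one substantive discrepancy is your account of the $2m$ prefactor: in the paper it does not arise from $\ell_1$-to-$\ell_2$ bookkeeping over $\alpha$ but from the importance-weighted single-task stochastic oracle $F_i(\theta;(x,y,i')) = m\,\mathbb{1}[i=i']\,f_i(\theta)$ used to fit the framework, which inflates the second-moment bounds to $m^2L^2$ and $m^2B_\ell^2$; with full-information gradients your two regret bounds as written would actually yield the tighter constant without the $m$.
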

With the convergence analysis, we further deduce that the solution of ExcessMTL is Pareto optimal in the convex setting. \looseness=-1

\begin{restatable}[Pareto Optimality]{corollary}{pareto}\label{co:pareto-optimal}
    Under the same conditions as Theorem \ref{thm:convergence}, using the weights $\alpha$ output by \Cref{algo:main}, we have
    \begin{align}
        &\E\left[\sum_{i=1}^m \alpha_i\cE_i(\bar{\theta}^{(1:t)})\right] - \sum_{i=1}^m \alpha_i\cE_i(\theta^*) \nonumber \\
        \leq &2m\sqrt{\frac{10(B_\theta^2 L^2 + B_\ell^2 \log m)}{t}},
    \end{align}
    where $\theta^*$ is the Pareto optimal solution, i.e., $\theta^*=\argmin_\theta \sum_{i=1}^m \alpha_i\cE_i(\theta)$.
\end{restatable}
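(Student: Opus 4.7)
The plan is to derive the corollary as an immediate consequence of the two-sided no-regret analysis underlying Theorem \ref{thm:convergence}, which actually delivers a saddle-point certificate strictly stronger than the stated min--max bound. Under the convexity of each $\ell_i$ in $\theta$, the objective $f(\theta,\alpha)\coloneqq\sum_{i=1}^m\alpha_i\cE_i(\theta)$ is convex in $\theta$ and linear in $\alpha$ over the compact simplex $\Delta_m$, so Sion's minimax theorem applies and strong duality holds between $\min_\theta\max_{\alpha'} f$ and $\max_{\alpha'}\min_\theta f$. The proof of Theorem \ref{thm:convergence} combines the online gradient descent regret for $\theta$ with the exponentiated gradient regret for $\alpha$; using convexity in $\theta$ to push Jensen through $\bar{\theta}^{(1:t)}$, and linearity in $\alpha$ to rewrite $\tfrac{1}{t}\sum_\tau f(\theta,\alpha^{(\tau)})$ as $f(\theta,\bar{\alpha}^{(1:t)})$, the two regret bounds collapse into a single saddle-point inequality
$$\E\!\left[\max_{\alpha'\in\Delta_m} f\!\left(\bar{\theta}^{(1:t)},\alpha'\right)\right]-\min_\theta f\!\left(\theta,\bar{\alpha}^{(1:t)}\right)\;\le\;2m\sqrt{\frac{10(B_\theta^2 L^2+B_\ell^2\log m)}{t}}.$$

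To obtain the corollary, I would interpret the $\alpha$ output by Algorithm \ref{algo:main} as the time-averaged weight $\bar{\alpha}^{(1:t)}$, which is the standard Cesaro companion to $\bar{\theta}^{(1:t)}$. Writing $\alpha=\bar{\alpha}^{(1:t)}$ and $\theta^*=\argmin_\theta f(\theta,\alpha)$, the two trivial inequalities $f(\bar{\theta}^{(1:t)},\alpha)\le\max_{\alpha'}f(\bar{\theta}^{(1:t)},\alpha')$ and $f(\theta^*,\alpha)=\min_\theta f(\theta,\alpha)$ chain with the saddle-point bound above to yield
$$\E\!\left[\sum_{i=1}^m \alpha_i\cE_i(\bar{\theta}^{(1:t)})\right]-\sum_{i=1}^m \alpha_i\cE_i(\theta^*)\;\le\;2m\sqrt{\frac{10(B_\theta^2 L^2+B_\ell^2\log m)}{t}},$$
which is the claim. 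This directly certifies approximate Pareto stationarity: $\bar{\theta}^{(1:t)}$ approximately minimizes the $\alpha$-weighted combination of excess risks, so the first-order condition $\sum_i\alpha_i\nabla\cE_i(\bar{\theta}^{(1:t)})\approx 0$ is met at rate $\mathcal{O}(1/\sqrt{t})$.

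The main obstacle I anticipate is the distinction between averaged and last-iterate weights: the clean argument runs through $\alpha=\bar{\alpha}^{(1:t)}$ precisely because linearity in $\alpha$ is what converts the exponentiated-gradient regret into a usable saddle-point inequality, whereas reading $\alpha$ literally as $\alpha^{(t)}$ would demand a last-iterate convergence guarantee for mirror descent with entropic regularization that is strictly stronger than the average-iterate analysis behind Theorem \ref{thm:convergence}. Under either interpretation, the Pareto-stationarity conclusion is unaffected, since stationarity only requires the existence of \emph{some} $\alpha\in\Delta_m$ for which $\bar{\theta}^{(1:t)}$ approximately minimizes $f(\cdot,\alpha)$, and strong duality guarantees such an $\alpha$ is produced in the limit by any no-regret procedure on the dual simplex.
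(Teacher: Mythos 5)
Your proposal is correct and follows essentially the same route as the paper: both specialize the saddle-point regret bound underlying Theorem~\ref{thm:convergence} by lower-bounding the maximum over $\alpha'\in\Delta_m$ with the particular output weights and observing that $\theta^*$ attains the inner minimum, so the corollary is an immediate restriction of the regret inequality. Your explicit reading of the output weights as the averaged $\bar{\alpha}^{(1:t)}$ is in fact more careful than the paper's own write-up, which substitutes the last iterate $\alpha^{(t)}$ into a bound that the cited result of Nemirovski et al.\ states for the Ces\`aro average.
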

This condition is the same as \Cref{def:pareto-optimal} because the weights $\alpha$ are positive, so any Pareto improvement over $\theta^*$ would increase the sum. The proof is in \Cref{proof:pareto_opt}. For non-convex cases, we next provide a stationary analysis. 

\begin{restatable}[Pareto Stationarity]{theorem}{paretostationary}\label{thm:convergence_nonconvex}
Suppose each task-specific loss $\ell_i$ is (i) L-Lipschitz (ii) G-Smooth and (iii) bounded by $B_\ell$. At training step $t$, using the weights $\alpha$ output by \Cref{algo:main}, we have
    \begin{equation}
    \min_{k=1,\ldots,t}\mathbb{E}\left[\left\|\sum_{i=1}^m {\alpha_i^{(k)}} \nabla \ell_i (\theta^{(k)}) \right\|^2_2\right]\leq 6m\sigma \sqrt{\frac{B_{\ell} G }{t}} ,
    \end{equation}
    where $m$ is the number of tasks, and $\sigma$ corresponds to the assumption of a bounded approximation error of the subgradient and the gradient of the excess risk detailed in \Cref{as:grad} of \Cref{proof:pareto_opt}.
\end{restatable}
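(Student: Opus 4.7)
The plan is to treat this as a standard non-convex SGD analysis applied to the time-varying objective $F^{(k)}(\theta)\coloneqq\sum_{i=1}^m \alpha_i^{(k)}\ell_i(\theta)$, where the extra twist relative to vanilla SGD is that the loss being descended changes from step to step because the weights $\alpha^{(k)}$ are updated by the exponentiated-gradient step in \Cref{algo:main}. Since $\ell_i$ is $G$-smooth for every $i$ and $\alpha^{(k)}\in\Delta_m$, the weighted sum $F^{(k)}$ is also $G$-smooth, and the shared-parameter update is precisely a (stochastic) gradient step on $F^{(k)}$ at $\theta^{(k)}$.

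First I would invoke the smoothness descent lemma applied to $F^{(k)}$:
\begin{equation*}
F^{(k)}(\theta^{(k+1)}) \leq F^{(k)}(\theta^{(k)}) - \eta_\theta \bigl\langle \nabla F^{(k)}(\theta^{(k)}),\, \tilde g^{(k)}\bigr\rangle + \tfrac{G \eta_\theta^2}{2}\bigl\|\tilde g^{(k)}\bigr\|_2^2,
\end{equation*}
where $\tilde g^{(k)}$ is the stochastic estimate of $\nabla F^{(k)}(\theta^{(k)})$. Taking conditional expectation and using an unbiasedness/variance assumption $\mathbb{E}\|\tilde g^{(k)}-\nabla F^{(k)}(\theta^{(k)})\|_2^2\leq\sigma^2$ turns this into the usual one-step descent inequality $\mathbb{E}[F^{(k)}(\theta^{(k+1)})]\leq \mathbb{E}[F^{(k)}(\theta^{(k)})]-(\eta_\theta-\tfrac{G\eta_\theta^2}{2})\mathbb{E}\|\nabla F^{(k)}(\theta^{(k)})\|_2^2+\tfrac{G\eta_\theta^2\sigma^2}{2}$. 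Next I would bridge $F^{(k)}$ and $F^{(k+1)}$ at the new iterate: since each $\ell_i$ is bounded by $B_\ell$,
\begin{equation*}
\bigl|F^{(k+1)}(\theta^{(k+1)})-F^{(k)}(\theta^{(k+1)})\bigr|\leq B_\ell\,\|\alpha^{(k+1)}-\alpha^{(k)}\|_1,
\end{equation*}
and the multiplicative weight step with bounded excess-risk surrogates $\hat\cE_i^{(k)}$ gives $\|\alpha^{(k+1)}-\alpha^{(k)}\|_1=\mathcal{O}(\eta_\alpha m)$, which will contribute an $\mathcal{O}(\eta_\alpha)$ drift per step.

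Then I would telescope the combined inequality from $k=1$ to $t$. The left-hand side collapses to $F^{(1)}(\theta^{(1)})-F^{(t+1)}(\theta^{(t+1)})$, bounded in absolute value by $B_\ell$. Rearranging and dividing by $t$ yields
\begin{equation*}
\frac{1}{t}\sum_{k=1}^t \mathbb{E}\bigl\|\nabla F^{(k)}(\theta^{(k)})\bigr\|_2^2 \;\lesssim\; \frac{B_\ell}{\eta_\theta t} + G\eta_\theta \sigma^2 + \frac{B_\ell\eta_\alpha m}{\eta_\theta},
\end{equation*}
and the minimum over $k$ is upper-bounded by the average. Choosing $\eta_\theta=\Theta(\sqrt{B_\ell/(G\sigma^2 t)})$ and $\eta_\alpha$ commensurate (so the drift term matches the descent term), the right-hand side collapses to $\mathcal{O}(m\sigma\sqrt{B_\ell G/t})$, matching the stated bound up to the constant $6$.

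The main obstacle I anticipate is the drift term coming from $\|\alpha^{(k+1)}-\alpha^{(k)}\|_1$: the objective being descended literally changes at every step, so one cannot telescope $F^{(k)}$ across $k$ directly, and a careless bound on the drift produces an extra $\sqrt{m}$ or destroys the $1/\sqrt{t}$ rate. Carefully balancing $\eta_\alpha$ against $\eta_\theta$—exploiting that the $L$-Lipschitz and $B_\ell$-boundedness assumptions force the exponent in the multiplicative update to stay $\mathcal{O}(\eta_\alpha)$—is what produces the clean $6m\sigma\sqrt{B_\ell G/t}$ rate. A secondary nuisance is that the approximate Hessian used in $\hat\cE_i^{(k)}$ is only a proxy for the true excess risk, but for this stationarity statement one only needs the weights to remain in $\Delta_m$ and to change slowly, both of which are preserved by the multiplicative update regardless of the surrogate's exactness.
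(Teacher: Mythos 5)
Your proposal follows essentially the same route as the paper's proof: a $G$-smoothness descent lemma applied to the weighted objective, control of the per-step drift of the weights via the multiplicative update (the exponent is $\mathcal{O}(\eta_\alpha B_\ell)$, so $\|\alpha^{(k+1)}-\alpha^{(k)}\|_1=\mathcal{O}(m\eta_\alpha B_\ell)$), a telescoping sum that absorbs the resulting $\mathcal{O}(\eta_\alpha/\eta_\theta)$ drift term, and a final balancing of $\eta_\theta\sim\sqrt{B_\ell/(G\sigma^2 t)}$ against $\eta_\alpha$. This is the paper's Lemmas on the descent step, the summation with time-varying weights, and the step-size choice, in the same order.

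One step is glossed over, though it is repairable by the same mechanism you already invoke. You assert that taking conditional expectation yields ``the usual one-step descent inequality,'' i.e.\ that $\mathbb{E}\langle\nabla F^{(k)}(\theta^{(k)}),\tilde g^{(k)}\rangle=\mathbb{E}\|\nabla F^{(k)}(\theta^{(k)})\|_2^2$. But in \Cref{algo:main} the weights used in the parameter update are computed from the \emph{same} stochastic gradients $g_i^{(k)}$ that enter $\tilde g^{(k)}$, so the weights and the gradient noise are correlated and the cross term does not reduce to the squared gradient norm; there is an additional bias. The paper devotes a separate lemma to this, decomposing the cross term into a covariance piece (bounded by $|\alpha_i^{(k)}-\mathbb{E}[\alpha_i^{(k)}]|$, which is again controlled by the $\mathcal{O}(\eta_\alpha B_\ell)$ per-step weight change) plus a zero-mean piece, yielding an extra $\mathcal{O}(m^{3/2}L\sigma\eta_\alpha B_\ell)$ term in the descent inequality; the same correlation also inflates the second-moment bound on $\|\tilde g^{(k)}\|_2^2$ beyond $\|\nabla F^{(k)}\|_2^2+m\sigma^2$. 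These extra terms are of the same order as your drift term and vanish under the same step-size balancing (this is also where the Lipschitz constant $L$ enters the analysis, which your sketch never uses), so the final rate survives — but a complete write-up must account for this weight--noise correlation explicitly rather than assuming conditional unbiasedness of $\tilde g^{(k)}$.
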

This shows that ExcessMTL converges to a Pareto stationary point with the rate $\cO(1/t^{1/4})$, matching the rate for single-objective SGD~\citep{drori2020complexity}. We also empirically show that the algorithm performs well under the non-convex setting in \Cref{exp}. The proof is in \Cref{proof:pareto_stationary}.

The results demonstrate the theoretical soundness of our method, as it provably converges to optimal solutions in multi-objective optimization. MGDA and GroupDRO also provably converge to Pareto optimal solutions in the convex setting, while GradNorm and IMTL lack this property, meaning that their solutions can be Pareto dominated.

\section{Experiments} \label{exp}
The experiments aim to investigate the following questions under different levels of noise injection: i) Does label noise significantly harm MTL performance? ii) Does ExcessMTL perform consistently with its theoretical properties? iii) In presence of label noise, does ExcessMTL maintain high overall performance? iv) If so, is this achieved by appropriately assigning weights to the noisy tasks? In the subsequent analysis, we provide affirmative answers to all the questions.

\subsection{Datasets}

\textbf{MultiMNIST~\citep{sabour2017dynamic}} is a multi-task version of the MNIST dataset. Two randomly selected MNIST images are put on the top-left and bottom-right corners respectively to construct a new image. Noise is injected into the bottom-right corner task.


\textbf{Office-Home~\citep{venkateswara2017deep}} consists of four image classification tasks: artistic images, clip art, product images, and real-world images. It contains 15,500 images over 65 classes. Noise is injected into the product image classification task.

\textbf{NYUv2~\citep{silberman2012indoor}} consists of RGB-D indoor images. It contains 3 tasks: semantic segmentation, depth estimation, and surface normal prediction. Noise is injected into the semantic segmentation task.

\subsection{Noise Injection Scheme}
To replicate the real-world scenario where tasks come from heterogeneous sources with various quality, we inject label noise into one or more (but not all) tasks within the task batch. For classification problems, we adopt the common practice of introducing symmetric noise~\citep{kim2019nlnl}, which is generated by flipping the true label to any possible labels uniformly at random. For regression problems, we introduce additive Gaussian noise~\citep{husimple} with the variance equal to that of the original regression target. This approach ensures that the level of noise introduced is consistent with the statistical characteristics of the original data. We vary the noise level by changing the proportion of training data subject to the noise injection procedure, allowing us to measure the impact of increasing levels of noise on the performance of our algorithm. The test data remains clean. \looseness=-1


In the following sections, we refer to the tasks with noise injection as \textit{noisy tasks} and tasks without noise injection as \textit{clean tasks}. The proportion of noisy data in the noisy task is referred to as \textit{noise level}.


\subsection{Empirical Analysis and Comparison} \label{emp_comp}

In this section, we use MultiMNIST and Office-Home as illustrative examples to analyze the behavior of ExcessMTL in detail and demonstrate its advantage comparing with other task weighting methods.

\begin{figure}[t!]
    \centering
    \includegraphics[width=0.9\linewidth]{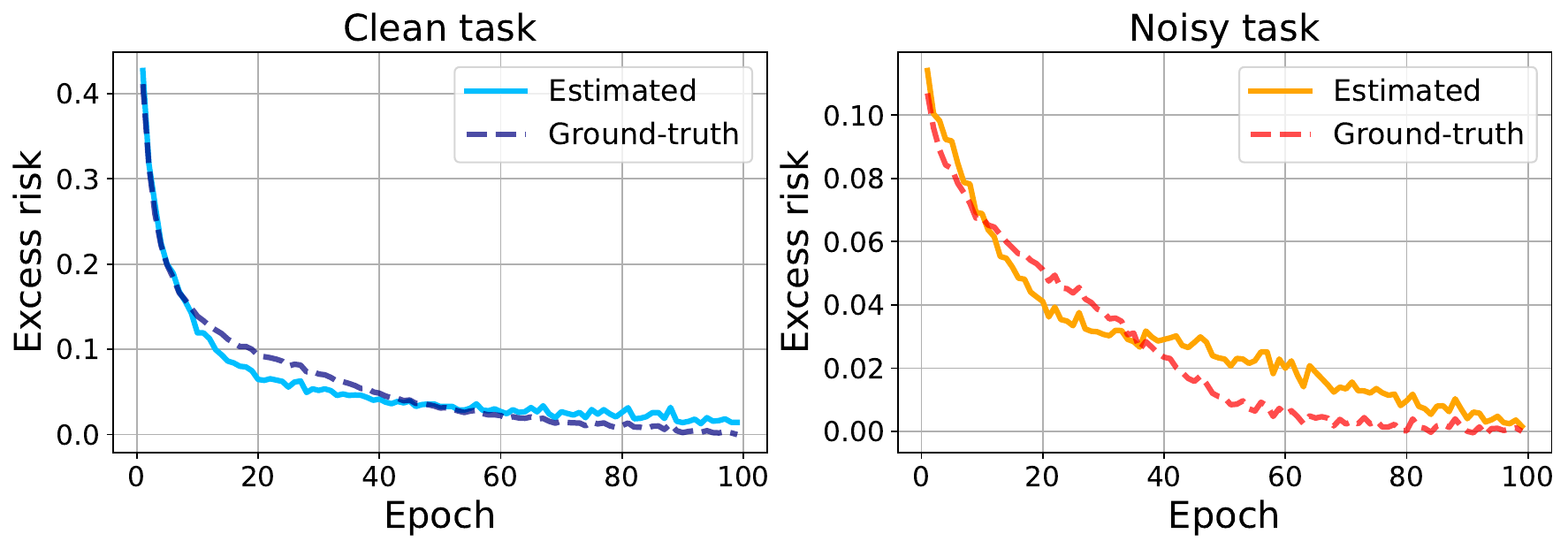}
    \vspace{-0.3cm}
    \caption{Excess risks on MultiMNIST with noise level 0.6. The estimated excess risk well matches the ground-truth pattern.}
    \label{fig:er}
    \vspace{-0.2cm}
\end{figure}

\textbf{Excess risk estimation.}~We present the estimated excess risks on MultiMNIST in Fig. \ref{fig:er}. Here, we use the difference between the current and converged loss as a proxy for the ground-truth excess risk. Initially, the noisy task shows lower excess risk due to higher Bayes optimal loss. As training proceeds, both tasks converge and excess risks approach 0. The estimation is up to a constant multiplier, so we scale the estimated value by a constant to align it with the ground-truth. Our estimated excess risk well matches the ground-truth pattern, validating its accuracy. 

\begin{figure}[t!]
    \centering
    \includegraphics[width=\linewidth]{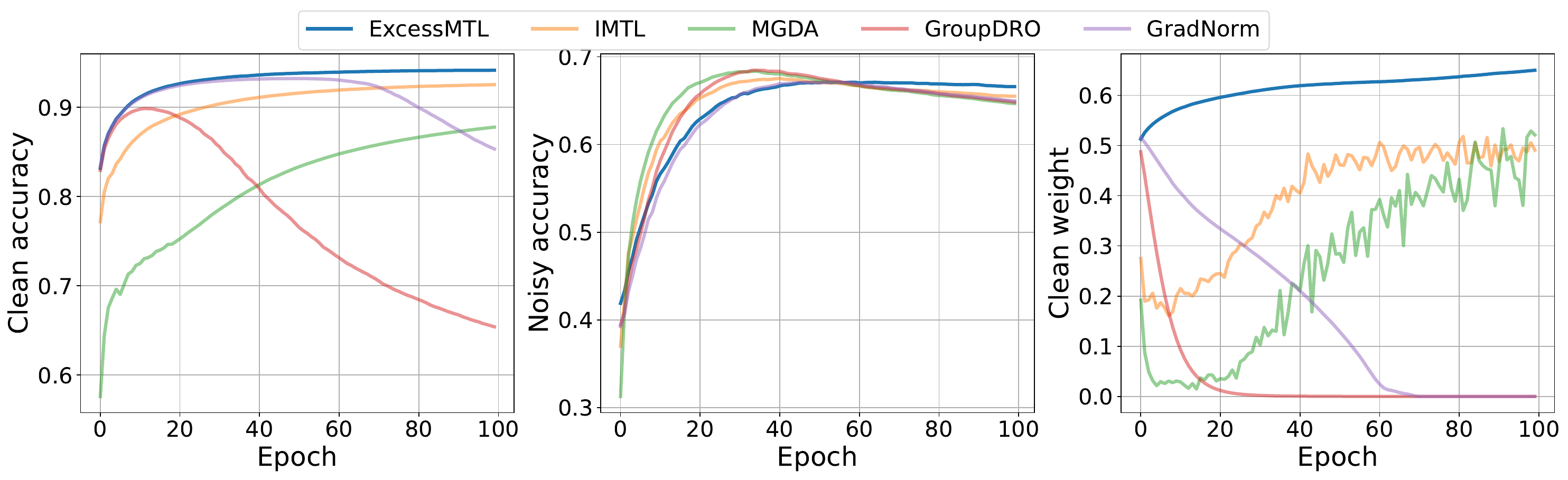}
    \vspace{-0.7cm}
    \caption{Weight and accuracy on the MultiMNIST dataset with a noise level of 0.8. ExcessMTL assigns most weight to the clean task so that the performance is least affected by the injected noise.}
    \label{fig:comparison}
\end{figure}

\begin{figure}[t!]
    \centering
    \includegraphics[width=0.9\linewidth]{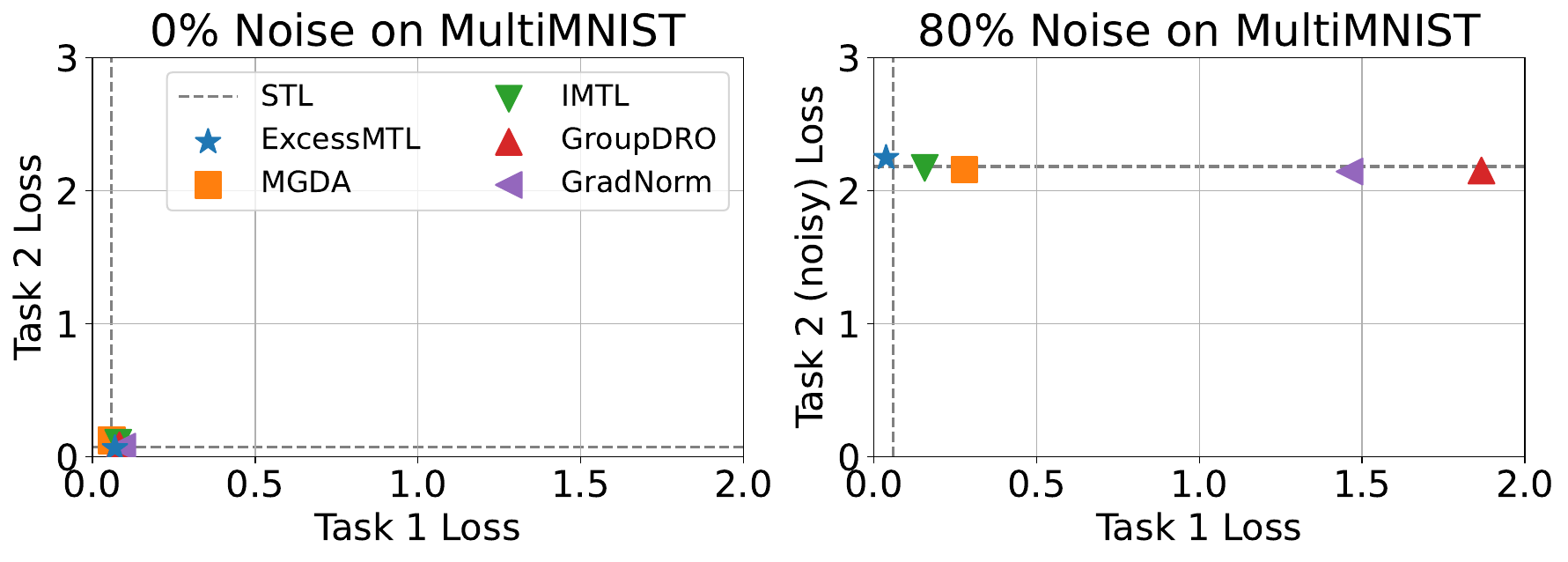}
    \vspace{-0.2cm}
    \caption{MultiMNIST loss profile (lower left better). The left plot has no noise injected, while in the right one, task 2 has 80\% noise. With no noise injected, all algorithms achieve ideal performance. However, with significant noise injected, only ExcessMTL retains performance close to Bayes optimal on both tasks.}
    \label{fig:pareto}
    \vspace{-0.2cm}
\end{figure}

\begin{figure*}[t!]
    \centering
    \includegraphics[width=0.75\linewidth]{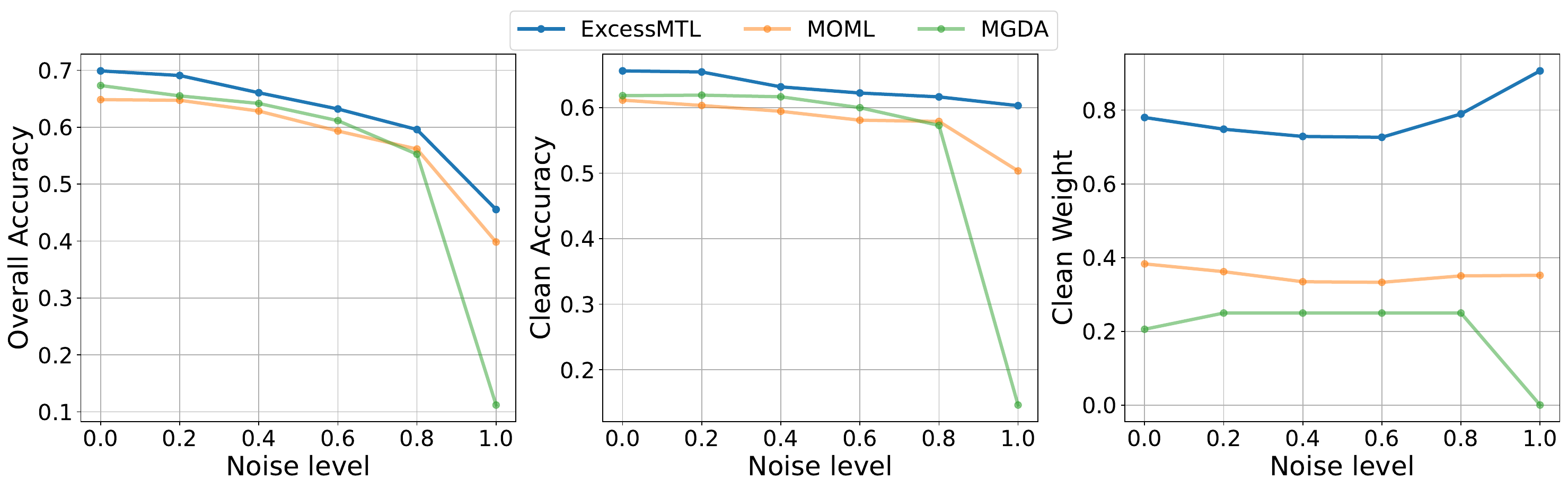}
    \vspace{-0.4cm}
    \caption{Comparison with MOML and MGDA. MGDA and MOML use the same method to select weights on training and validation set respectively. Despite more consistent weight assignment than MGDA, MOML fails when noise level is high, showing that a clean validation set does not alleviate the label noise issue. ExcessMTL ourperforms both baselines.}
    \label{fig:moml}
\end{figure*}

\begin{figure*}[t!]
    \centering
    \includegraphics[width=0.75\linewidth]{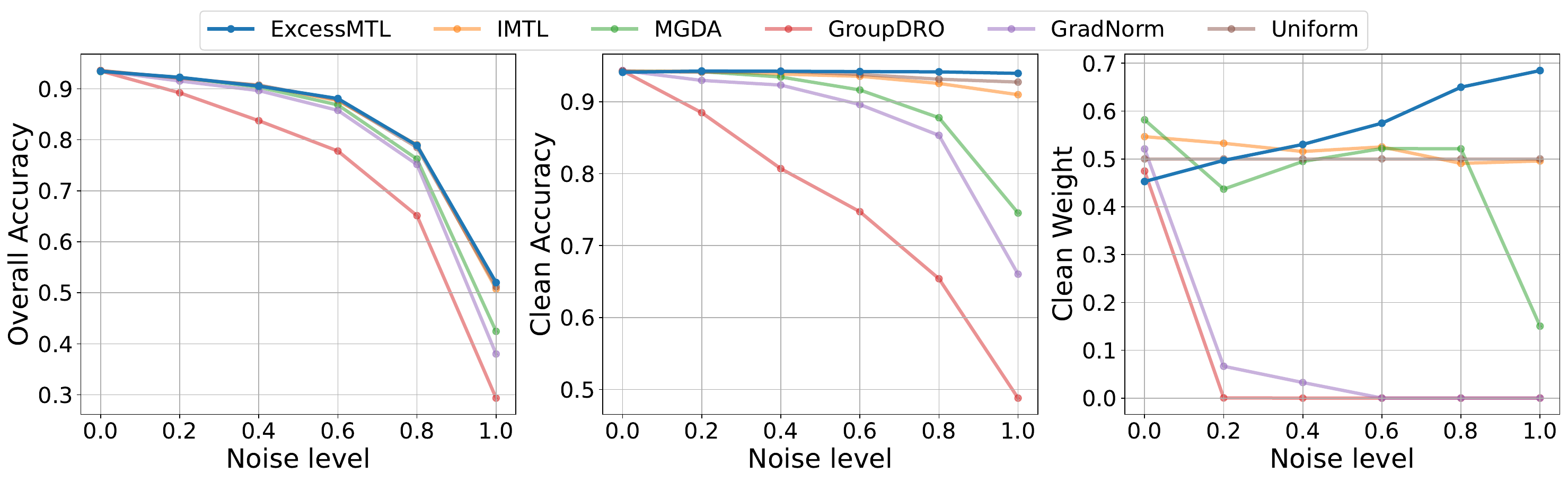}
    \vspace{-0.4cm}
    \caption{Results on the MultiMNIST dataset. Only ExcessMTL assigns smaller weights to the noisy task, maintaining the clean task accuracy, whereas other methods show decreasing performance.}
    \label{fig:mnist}
\end{figure*}

\textbf{Training dynamics and weight assignment.}~We analyze the training dynamics for the algorithms mentioned in \Cref{conceptual_comparison} in \Cref{fig:comparison}. For each algorithm, we examine the change in task weights and test accuracy as training proceeds. GradNorm and GroupDRO rapidly assign substantial weights to the noisy task due to their high losses. IMTL and MGDA, on the other hand, initially allocate significant weights to the noisy task but eventually converge to nearly uniform weighting. Their emphasis on the noisy task dramatically impacts the performance of the clean task, leading to suboptimal performance.

In contrast, ExcessMTL identifies the higher excess risk associated with the clean task at the beginning of training, leading to a higher accumulation of weight on the clean task. Simultaneously, the reduced emphasis on the noisy task helps mitigate overfitting, leading to improved performance when learning with noise. This ability of ExcessMTL to assign appropriate weights based on the proximity to convergence contributes to its superior overall performance. 

\textbf{Performance comparison.}~In \Cref{fig:pareto}, we present the converged performance of all algorithms with and without significant noise injection. When more noise is injected into task 2, its Bayes optimal loss increases, leading to a degradation in its single-task performance. A robust MTL algorithm should retain performance close to Bayes optimal regardless of noise level, i.e., reaching or surpassing the intersection of the single-task performance (dashed line). Despite performing well under the noise-free scenario, all algorithms except ExcessMTL have performance decrease in task 1 under label noise. The performance for loss weighting methods aligns with expectations in \Cref{fig:er_loss}, i.e., they aim for equal losses across both tasks, resulting in undertraining of task 1. \looseness=-1


\textbf{Clean validation set.}~One may expect that the existence of a clean validation set would address the label noise issue, as validation performance is a more robust metric than training loss. However, since weight selection on the validation set is performed on the clean data whereas the training set still contains noise, 
it could lead to assigning considerable weight to noisy tasks. To illustrate this possibility, we compare with MOML \citep{ye2021multiobjective}, which assigns weights using MGDA on a clean validation set. On the Office-Home dataset, we allocate $20\%$ of the training data as a clean validation set and inject noise into the remainder. 

From \Cref{fig:moml}, despite the additional requirement of clean data, MOML remains largely impacted by label noise. When noise level is low, MOML performs worse than MGDA due to less gradient information contained in the validation set than the training set. When noise level is high, although MOML slightly improves over MGDA with more consistent weight assignment, the overall performance still significantly degrades. In contrast, ExcessMTL consistently outperforms both baselines, especially at high noise level.


 \subsection{Benchmark Evaluation}


We present the results on three MTL benchmarks. For MultiMNIST and Office-Home, we present three plots: the overall performance, the average performance on the clean tasks and the sum of task weights assigned to the clean tasks, where the x-axis is the noise level. For NYUv2, we provide the performance of each task since they have different evaluation criteria. Along with adaptive weighting algorithms mentioned in \Cref{conceptual_comparison}, we also include uniform scalarization as a baseline.

The key evaluation criteria for robustness is that \textit{the performance on the clean tasks should not be affected even in face of increasing label noise in the noisy tasks.}

\begin{figure*}[t!]
    \centering
    \includegraphics[width=0.75\linewidth]{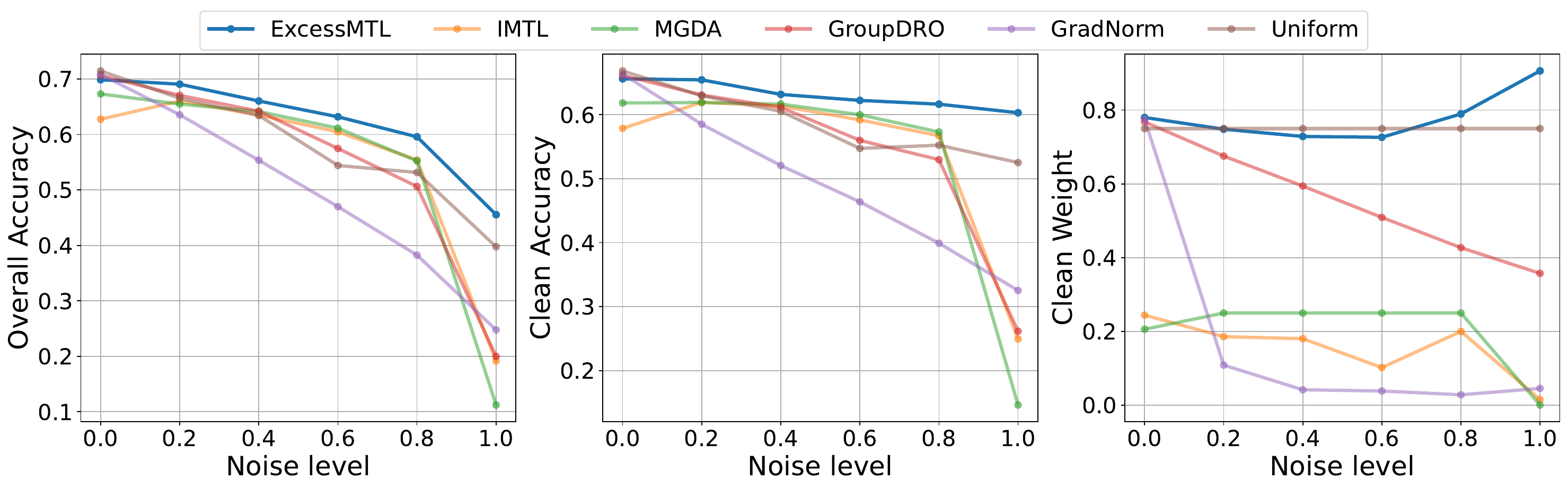}
    \vspace{-0.4cm}
    \caption{Results on the Office-Home dataset (noise in product classification). The left figure considers all tasks, while the other two consider all tasks except product images. The right figure is the combined weights of all clean tasks (0.75 for uniform scalarization). ExcessMTL is least affected by label noise and is the only adaptive algorithm assigning small weights to the noisy task.}
    \label{fig:office}
\end{figure*}

\begin{figure*}[t!]
    \centering
    \includegraphics[width=0.95\linewidth]{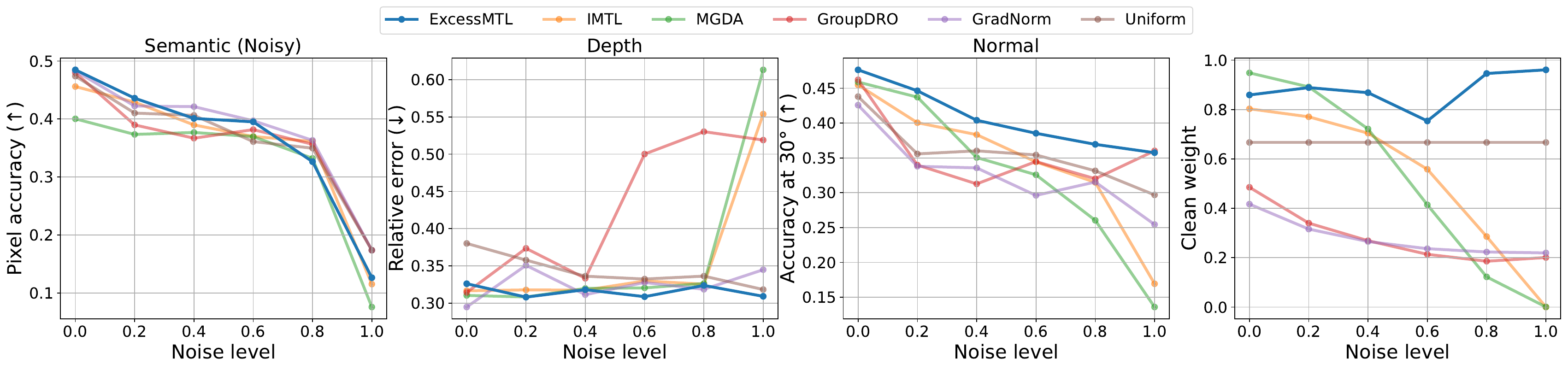}
    \vspace{-0.4cm}
    \caption{Results on the NYUv2 dataset (noise in semantic segmentation). Higher pixel accuracy for semantic segmentation ($\uparrow$), lower relative error for depth estimation ($\downarrow$) and higher angle accuracy for surface normal estimation ($\uparrow$) are desired. Pixel accuracy monotonically decreases as expected. ExcessMTL consistently achieves the best performance on the clean tasks, while other methods fail dramatically under high label noise.}
    \label{fig:nyu}
    \vspace{-0.4cm}
\end{figure*}

\textbf{MultiMNIST.}~From \Cref{fig:mnist}, we can see that for all adaptive weighting methods except ExcessMTL, the performance on the clean task monotonically declines as the noise level increases. The weight plot indicates that this performance degradation stems from the excessive weights assigned to noisy tasks, and in extreme cases, the weight on the clean task is 0, effectively disregarding it during optimization. In contrast, ExcessMTL identifies and assigns minimal weights to the noisy task, preserving the performance on clean tasks even under high label noise (flat line in the clean accuracy plot). 

Surprisingly, uniform scalarization also exhibits resilience to label noise, although slightly inferior to ExcessMTL. This can be attributed to the nature of MultiMNIST as an easy dataset with abundant data in the sense that two tasks have small interference with each other. Therefore, even if one task is fully corrupted, the clean task provides sufficient information to learn a model. However, we find that the performance of uniform scalarization is inconsistent and varies across datasets, as shown by the following experiments.

\textbf{Office-Home.}~Similar trends can be observed in \Cref{fig:office}. The performance of ExcessMTL is least affected on the clean tasks, showcasing its robustness. When the noisy task has purely random label (noise level$=1$), all other adaptive methods completely fail. In contrast to the results in MultiMNIST, uniform scalarization is not sufficient to obtain good performance in this real-world dataset as it is outperformed by most adaptive weighting methods in the noisy setting. This underscores the need for adaptive methods in such scenarios.


\textbf{NYUv2.}~From \Cref{fig:nyu}, we observe that ExcessMTL consistently outperforms other methods on the two clean tasks. Other adaptive weighting algorithms assign increasing weights to the noisy task with rising noise level. The phenomenon is particularly evident when the noise level is high, confirming our hypothesis that label noise leads to suboptimal performance for all tasks trained jointly. Since the loss magnitude of the surface normal estimation task is the smallest, GroupDRO and uniform scalarization exhibit clear undertraining on it even without noise injection. Through our scale processing method, ExcessMTL evaluates the excess risks on a comparable scale and trains on all tasks in a more balanced way.


In summary, the experiments validate our hypothesis that label noise affects not only the noisy tasks, but also all tasks trained jointly, leading to an overall decline in performance. Compared with other adaptive weighting methods, ExcessMTL consistently achieves high overall performance by appropriately assigning weights to noisy tasks, thereby mitigating the negative impact of label noise. Notably, ExcessMTL maintains competitive performance with other methods in noise-free scenarios, indicating that its robustness does not come at the cost of performance degradation.


\section{Related Work}


\textbf{MTL Optimization}~can be tackled from multiple perspectives. One line of work focuses on network architecture design~\citep{dai2016instance,liu2019end}. Another approach is gradient manipulation, where \citet{chen2020just} drops gradient components by the extent of conflict; \citet{yu2020gradient} removes gradient conflict by projection; \citet{liu2021conflict}  minimizes average loss while tracking the improvement on the worst task. Researchers have also studied the problem from a game theory perspective~\citep{nashmtl}. In this study, our primary focus lies on task weighting methods~\citep{kendall2018multi,chen2018gradnorm,sener2018multi,liutowards,Lin2021ReasonableEO}.

\textbf{Task weighting}~is a common strategy employed to prioritize tasks with inferior performance. Existing algorithms in this domain differ in their measures of task difficulty, including homoscedastic uncertainty~\citep{kendall2018multi}, task-specific performance measures~\citep{guo2018dynamic}, norms of gradients and training rates~\citep{chen2018gradnorm}, or a combination of task-specific losses and gradients~\citep{liutowards}. Another approach formulates the MTL problem as a multi-objective optimization problem~\citep{sener2018multi} and utilizes the classical multiple-gradient descent algorithm (MGDA) \citep{mukai,FliegeSvaiter00,desideri2012multiple} to find Pareto stationary solutions. While MGDA does not explicitly use gradient norms for task weighting, it implicitly considers them by seeking the minimum norm point within the convex hull of task gradients.

However, the problem of label noise is often overlooked in the existing works, which can make many proposed task difficulty measures inappropriate. For instance, high label noise will lead to high losses and potentially small gradient magnitudes, making aforementioned algorithms assign high weights to the noisy tasks and causing an imbalance in training. Our work incorporates the idea of prioritizing difficult tasks, while further improves the robustness to label noise. \looseness=-1

\textbf{Loss weighting in other applications.} Beyond MTL, various machine learning problems employ similar mechanism of focusing on challenging samples, often utilizing loss as a difficulty measure. In domain generalization, prior works focus on domains or groups with higher losses to improve generalization~\citep{sagawadistributionally,pmlr-v139-liu21f,piratlafocus}. In hard example mining, researchers use losses to determine whether specific data instances are difficult for the model~\citep{Shrivastava2016TrainingRO,Yuan_2017_ICCV,xue2019hard}. Similar to MTL, those applications also suffer from the label noise problem. We believe our proposed method can offer a more robust difficulty measure for such applications. This extension of our methodology can pave the way for more reliable solutions across a spectrum of machine learning challenges.



\section{Conclusion}
In this work, we identify a key limitation of existing adaptive weight updating methods in multi-task learning, i.e., the vulnerability to label noise. Building upon this observation, we propose ExcessMTL, a task balancing algorithm based on excess risks. It employs multiplicative weight update to dynamically adjust the task weights according to their respective distance to convergence. We further show the convergence guarantees of the proposed algorithm and establish connections with multi-objective optimization, showing the Pareto stationarity of its solutions. Extensive experiments across diverse MTL benchmarks demonstrate the consistent superiority of our method in the presence of label noise. Beyond multi-task learning, our insights on excess risks and their connection with convergence distance can potentially inspire more robust algorithmic design in various machine learning applications utilizing loss weighting methods.

\section*{Acknowledgements}
HZ is partially supported by a research grant from the Amazon-Illinois Center on AI for Interactive Conversational Experiences (AICE) and a Google Research Scholar Award.

\section*{Impact Statement}
This paper presents work whose goal is to advance the field of Machine Learning. There are many potential societal consequences of our work, none which we feel must be specifically highlighted here.


\bibliography{reference}
\bibliographystyle{icml2024}

\newpage
\appendix
\onecolumn


\section{Proof for Pareto Optimality}\label{proof:pareto_opt}
\convergence*
\begin{proof}
    We directly apply the well-established regret bound for online mirror descent from~\citet{nemirovski2009robust} on the saddle point problem
    \begin{align*}
        \min_{\theta\in\Theta} \max_{\alpha\in\Delta_m} \sum_{i=1}^m \alpha_i f_i(\theta).
    \end{align*}
    In our case, $f_i(\cdot)$ is the excess risk for the $i$-th task $\cE_i(\cdot)$. We take inspiration from the proof strategy of Proposition 2 in~\cite{sagawadistributionally} to first present the bound and then explain why our formulation satisfies all conditions for the bound.
    \begin{assumption}\label{as:convex}
        $f_i$ is convex on $\Theta$.
    \end{assumption}
    \begin{assumption}\label{as:func}
        Let $\xi$ be a random vector that takes value in $\Xi$. For all $i\in[m]$, there exists a function $F_i:\Theta\times\Xi\rightarrow\bR$ such that $\E_{\xi\sim p}[F_i(\theta;\xi)]=f_i(\theta)$.
    \end{assumption}
    \begin{assumption}\label{as:grad}
        For every given $\theta\in\Theta$ and $\xi\in\Xi$, we are able to compute $F_i(\theta,\xi)$ and the subgradient $\nabla F_i(\theta,\xi)$ such that $\E_{\xi\sim p}[\nabla F_i(\theta,\xi)]=\nabla f_i(\theta)$, $\E_{\xi\sim p}[\|\nabla F_i(\theta,\xi)-\nabla f_i(\theta)\|]\leq \sigma$.
    \end{assumption}    
    \begin{theorem}[\citealt{nemirovski2009robust}]
        If Assumption 1-3 hold, at training step $t$, the regret for the online mirror descent algorithm on the saddle point problem
        \begin{align*}
            \min_{\theta\in\Theta} \max_{\alpha\in\Delta_m} \sum_{i=1}^m \alpha_i f_i(\theta)
        \end{align*}
        can be bounded by
        \begin{align*}
            \E\left[ \max_{\alpha\in\Delta_m} \sum_{i=1}^m \alpha_i f_i(\bar{\theta}^{(1:t)}) - \min_{\theta\in\Theta} \sum_{i=1}^m \bar{\alpha_i}^{(1:t)} f_i(\theta) \right] \leq 2\sqrt{\frac{10(R_\theta^2 M^2_{*,\theta} + M^2_{*,\alpha} \log m)}{t}},
        \end{align*}
        where
        \begin{align*}
            &\E \left[ \left\| \nabla_\theta \sum_{i=1}^m \alpha_i F_i(\theta;\xi) \right\|^2_{*,\theta} \right] \leq M_{*,\theta},\\
            &\E \left[ \left\| \nabla_\alpha \sum_{i=1}^m \alpha_i F_i(\theta;\xi) \right\|^2_{*,\alpha} \right] \leq M_{*,\alpha },\\
            &R_\theta^2=\frac{1}{c}(\max_\theta \| \theta \|^2_\theta - \min_\theta \| \theta \|^2_\theta)
        \end{align*}
        for a $c-$strongly convex norm $\| \cdot \|_\theta$.
    \end{theorem}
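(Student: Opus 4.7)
The plan is to view Algorithm 1 as stochastic saddle-point mirror descent/ascent applied to $\min_{\theta} \max_{\alpha \in \Delta_m} \Phi(\theta,\alpha)$ with $\Phi(\theta,\alpha) \coloneqq \sum_{i=1}^{m} \alpha_i \cE_i(\theta)$, and to invoke the off-the-shelf Nemirovski regret bound quoted in the excerpt. The $\theta$-block uses Euclidean geometry, so mirror descent collapses to ordinary SGD, matching the shared-parameter update in Algorithm 1. The $\alpha$-block uses the entropic mirror map on the simplex, whose update is precisely the multiplicative $\alpha_i^{(t+1)} \propto \alpha_i^{(t)} \exp(\eta_\alpha \hat\cE_i^{(t)})$ used in the algorithm. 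Because $\cE_i(\theta) = \ell_i(\theta) - \ell_i(\theta_i^*)$ differs from $\ell_i$ only by an additive constant, convexity, Lipschitzness, and boundedness all transfer from $\ell_i$ to $\cE_i$ unchanged.

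Next I would verify the three preconditions of the quoted theorem and instantiate the constants. Convexity (Assumption 1) is hypothesis (ii). For Assumption 2 I take $F_i(\theta;\xi)$ to be the per-example loss, which is an unbiased estimator of $\cE_i$ up to the irrelevant constant $\ell_i(\theta_i^*)$. For Assumption 3, hypothesis (i) gives $\|\nabla_\theta F_i\|_2 \le L$, so $\|\nabla_\theta \sum_i \alpha_i F_i\|_2 \le L$ for $\alpha \in \Delta_m$, yielding $M_{*,\theta}^2 = O(L^2)$ in Euclidean norm; hypothesis (iii) gives $|F_i| \le B_\ell$ and hence $\|\nabla_\alpha \Phi\|_\infty \le B_\ell$, yielding $M_{*,\alpha}^2 = O(B_\ell^2)$ under the $\ell_\infty$ dual norm on the simplex. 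Hypothesis (iv) gives $R_\theta^2 = O(B_\theta^2)$, and the standard entropic bound $R_\alpha^2 = O(\log m)$ on $\Delta_m$ is already absorbed into the quoted formula. Plugging these in produces the Nemirovski duality-gap bound of order $\sqrt{(B_\theta^2 L^2 + B_\ell^2 \log m)/t}$.

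Third, I would translate Nemirovski's two-sided duality-gap statement into the one-sided quantity appearing in the theorem. Since $\alpha^{(t)} \in \Delta_m$, the LHS satisfies $\sum_i \alpha_i^{(t)} \cE_i(\bar\theta^{(1:t)}) \le \max_{\alpha\in\Delta_m} \sum_i \alpha_i \cE_i(\bar\theta^{(1:t)})$, and since $\bar\alpha^{(1:t)} \in \Delta_m$ is feasible in the inner max, $\min_\theta \max_\alpha \sum_i \alpha_i \cE_i(\theta) \ge \min_\theta \sum_i \bar\alpha_i^{(1:t)} \cE_i(\theta)$. Subtracting these two inequalities sandwiches the theorem's LHS inside the Nemirovski duality gap, and the claimed $O(1/\sqrt{t})$ rate follows.

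The main obstacle is tracking the leading factor of $m$ in the stated constant, which does not appear in the vanilla Nemirovski statement. It plausibly arises from treating $\theta = (\theta_{sh},\theta_1,\ldots,\theta_m)$ as a concatenation (so $\|\theta\|_2$ and the effective Lipschitz constant each pick up factors depending on $m$), combined with the scale-processing normalization and worst-case per-task bookkeeping across the $m$ blocks. These are tedious but routine accounting steps; the conceptual content of the proof sits entirely in (a) identifying the two-block mirror-descent structure of Algorithm 1 and (b) the elementary duality-gap rewriting above.
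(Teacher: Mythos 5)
The statement you were asked to prove is the imported saddle-point mirror-descent bound of Nemirovski et al., and the paper itself does not reprove it: it is quoted as a black box, and the entire content of the surrounding appendix argument lies in constructing a stochastic oracle that satisfies Assumptions 1--3 and instantiating the constants $M_{*,\theta}$, $M_{*,\alpha}$, $R_\theta$. Your proposal takes the same overall route (identify the two-block mirror-descent structure, verify the assumptions, plug in, convert the two-sided duality gap to the one-sided quantity), and your final duality-gap rewriting coincides with what the paper does to obtain its convergence bound and the Pareto-optimality corollary.

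The genuine gap is in the oracle construction, which is exactly the step you flag as an unresolved ``obstacle.'' You take $F_i(\theta;\xi)$ to be the per-example loss directly, conclude $M_{*,\theta}^2=O(L^2)$ and $M_{*,\alpha}^2=O(B_\ell^2)$, and then cannot account for the leading factor of $m$; your guess that it arises from concatenating the parameter blocks or from scale processing is not where it comes from. In the paper, a single random draw $\xi=(x,y,i')$ is sampled from the task mixture $p=\frac{1}{m}\sum_{i=1}^m P_i$, i.e., from one task at a time, so to make the stochastic estimate of each $f_i$ unbiased one must set $F_i(\theta;(x,y,i'))=m\,\mathbf{1}[i=i']\,f_i(\theta)$, an inverse-probability (importance) weighting over tasks. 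This factor of $m$ is what inflates the oracle bounds to $M_{*,\theta}=m^2L^2$ and $M_{*,\alpha}=m^2B_\ell^2$, and substituting these into the quoted bound $2\sqrt{10(R_\theta^2 M^2_{*,\theta}+M^2_{*,\alpha}\log m)/t}$ is precisely where the $2m$ prefactor in the paper's theorem originates. Without this construction your instantiation of Assumptions 2--3 does not match the sampling model of the algorithm, and the stated constant does not come out.
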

    Assumption \ref{as:convex} is the same as our condition (ii). For Assumption \ref{as:func}, we can let $\xi$ be the tuple $(x,y,i)$, where $i$ is the task index. Then, let the distribution of $\xi$ can be a mixture of each task distribution $P_i$, i.e.,
    \begin{align*}
        p\coloneqq \frac{1}{m}\sum_{i=1}^m P_i.
    \end{align*}
    To make $F_i$ an unbiased estimator for $f_i$, we construct
    \begin{align*}
        F_i(\theta;(x,y,i')) \coloneqq m\indicator[i=i'] f_i(\theta).
    \end{align*}
    We can check the validity of this construction by
    \begin{align*}
        \E_{(x,y,i')\sim p}[F_i(\theta;(x,y,i'))] = \frac{1}{m} \E_{P_i}[mf_i(\theta)] = f_i(\theta).
    \end{align*}
    For Assumption \ref{as:grad}, similarly,
    \begin{align*}
        \E_{(x,y,i')\sim p}[\nabla F_i(\theta;(x,y,i'))] = \frac{1}{m} \E_{P_i}[m\nabla f_i(\theta)] = \nabla f_i(\theta).
    \end{align*}
    According to our condition (iii) and (iv), we have that
    \begin{align*}
        &\E \left[ \left\| \nabla_\theta \sum_{i=1}^m \alpha_i F_i(\theta;(x,y,i')) \right\|^2_{*,\theta} \right] \leq m^2L^2 = M_{*,\theta},\\
        &\E \left[ \left\| \nabla_\alpha \sum_{i=1}^m \alpha_i F_i(\theta;\xi) \right\|^2_{*,\alpha} \right] \leq m^2B_\ell^2 = M_{*,\alpha },\\
        &R_\theta^2=\frac{1}{c}(\max_\theta \| \theta \|^2_\theta - \min_\theta \| \theta \|^2_\theta) = B_\theta^2.
    \end{align*}
    Therefore, we obtain
    \begin{align}
        \sum_{i=1}^m \alpha_i^{(t)}\cE_i(\bar{\theta}^{(1:t)}) - \min_{\theta\in\Theta} \max_{\alpha\in\Delta_m} \sum_{i=1}^m \alpha_i\cE_i(\theta) &\leq 
        \max_{\alpha\in\Delta_m} \sum_{i=1}^m \alpha_i f_i(\bar{\theta}^{(1:t)}) - \min_{\theta\in\Theta} \sum_{i=1}^m \bar{\alpha_i}^{(1:t)} f_i(\theta), \nonumber\\ 
        &\leq 2m\sqrt{\frac{10(B_\theta^2 L^2 + B_\ell^2 \log m)}{t}}.\label{eq:bound}
    \end{align}
\end{proof}

\pareto*
\begin{proof}

    From \Cref{eq:bound}, it is clear that $\alpha^{(t)}$ satisfies the above condition because
    \begin{align*}
        \sum_{i=1}^m \alpha_i^{(t)}\cE_i(\bar{\theta}^{(1:t)}) - \min_{\theta\in\Theta} \sum_{i=1}^m \alpha_i^{(t)}\cE_i(\theta)
        & \leq \max_{\alpha\in\Delta_m} \sum_{i=1}^m \alpha_i f_i(\bar{\theta}^{(1:t)}) - \min_{\theta\in\Theta} \sum_{i=1}^m \alpha_i^{(t)} f_i(\theta)\\
        &\leq 2m\sqrt{\frac{10(B_\theta^2 L^2 + B_\ell^2 \log m)}{t}}.
    \end{align*}
\end{proof}

\section{Proof for Pareto Stationary}\label{proof:pareto_stationary}
\paretostationary*
Before proving Theorem~\ref{thm:convergence_nonconvex}, we first present following necessary lemmas. Note that we rewrite $F_i ( \theta^{(t)}; \xi)$ as $F_i ( \theta^{(t)})$ for simplicity in the next context. Without loss of generality, we assume that $\cE_i({\theta}^{(t)})$ is bounded by $B_\ell$.

\begin{lemma}\label{lem:dot} Under the same assumption in Theorem~\ref{thm:convergence_nonconvex}, select nonincreasing ${\eta_{\theta}^{(t)}} \leq \min\{1/B_\ell,1/Gm\}$, we have the following inequality
    \begin{align*}
        & \mathbb{E}_{ \xi}\left[\left(\sum_{i=1}^m  \alpha_{i}^{(t)}\nabla f_i ( \theta^{(t)}) \right)^\top  \left(-\sum_{i=1}^m  \alpha_{i}^{(t)}\nabla F_i ( \theta^{(t)}) \right)\right]  \leq 4 m^{3/2} L \sigma {\eta_{\alpha}^{(t)}} B_{\ell} - \mathbb{E}_{ \xi}\left[\left\|\sum_{i=1}^m  \alpha_{i}^{(t)}\nabla f_i ( \theta^{(t)})\right\|_2^2 \right].
    \end{align*}
\end{lemma}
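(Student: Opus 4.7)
The core obstacle in \Cref{lem:dot} is that $\alpha^{(t)}$ is \emph{not} independent of the current mini-batch $\xi$: it is obtained from $\alpha^{(t-1)}$ via an exponentiated-gradient update whose exponent uses the stochastic excess-risk estimator, which itself depends on $\xi$. Consequently one cannot simply invoke $\E_\xi[\nabla F_i(\theta^{(t)})]=\nabla f_i(\theta^{(t)})$ to pass the expectation through the bilinear form and collapse it to $-\|\sum_i\alpha_i^{(t)}\nabla f_i\|_2^2$; the extra term $4m^{3/2}L\sigma\eta_\alpha^{(t)} B_\ell$ on the right-hand side measures precisely this correlation.

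My plan is to introduce a proxy weight vector $\tilde\alpha^{(t)}$ that is measurable with respect to the sigma-field generated by $\xi^{(1)},\ldots,\xi^{(t-1)}$; the natural choice is $\tilde\alpha^{(t)}:=\alpha^{(t-1)}$, i.e.\ the weights before the last exponentiated update. Using $|\hat{\cE}_i^{(t)}|\le B_\ell$ together with the fact stated in the prelude that $\cE_i(\theta^{(t)})\le B_\ell$, and applying standard one-step estimates for the entropic update, one derives drift bounds of the form $\|\alpha^{(t)}-\tilde\alpha^{(t)}\|_\infty=\cO(\eta_\alpha^{(t)}B_\ell)$ and $\|\alpha^{(t)}-\tilde\alpha^{(t)}\|_1=\cO(m\eta_\alpha^{(t)}B_\ell)$.

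I would then rewrite the quantity of interest as
\begin{align*}
\Bigl(\sum_i \alpha_i^{(t)}\nabla f_i\Bigr)^{\!\top}\!\Bigl(-\sum_i \alpha_i^{(t)}\nabla F_i\Bigr)
= -\Bigl\|\sum_i \alpha_i^{(t)}\nabla f_i\Bigr\|_2^{2}
+\Bigl(\sum_i \alpha_i^{(t)}\nabla f_i\Bigr)^{\!\top}\!\Bigl(\sum_j \alpha_j^{(t)}\delta_j\Bigr),
\end{align*}
where $\delta_j:=\nabla f_j-\nabla F_j$ satisfies $\E_\xi[\delta_j]=0$ and $\E_\xi[\|\delta_j\|]\le\sigma$ by Assumption~\ref{as:grad}. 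The first summand after taking expectation yields the $-\E_\xi[\|\sum_i\alpha_i^{(t)}\nabla f_i\|_2^{2}]$ that appears on the right-hand side of the lemma, so the work reduces to bounding the noise-coupling term by $4m^{3/2}L\sigma\eta_\alpha^{(t)}B_\ell$.

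For the noise-coupling term I would substitute $\alpha^{(t)}=\tilde\alpha^{(t)}+\epsilon^{(t)}$ into both sums and expand into four bilinear pieces. The $\tilde\alpha$--$\tilde\alpha$ piece has zero expectation since $\tilde\alpha^{(t)}$ is independent of $\xi^{(t)}$ and $\E_\xi[\delta_j]=0$. Each of the three remaining pieces carries at least one factor of $\epsilon^{(t)}$; Cauchy--Schwarz combined with $\|\nabla f_i\|\le L$, $\E_\xi[\|\delta_j\|]\le\sigma$, $\|\tilde\alpha\|_1=1$, and the drift estimate controls each by $\cO(\eta_\alpha^{(t)}L\sigma B_\ell)\cdot\mathrm{poly}(m)$. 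The $m^{3/2}$ emerges from pairing the $\ell_1$-drift bound $\|\epsilon^{(t)}\|_1\le\cO(m\eta_\alpha^{(t)}B_\ell)$ with an $\ell_2$-aggregated noise bound $\E_\xi[\|\sum_j\tilde\alpha_j\delta_j\|_2]\le\sqrt{m}\sigma$ (variance-addition for the zero-mean $\delta_j$); collecting the three contributions and absorbing constants produces the factor $4$.

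I expect the hard part to be the bookkeeping in this last step: decomposing the cross term so that every summand surviving expectation can be charged to the one-step drift $\epsilon^{(t)}$ without picking up extraneous factors of $m$, $L$, or $\sigma$. A secondary subtlety is the role of the step-size condition $\eta_\theta^{(t)}\le\min\{1/B_\ell,1/(Gm)\}$: together with $G$-smoothness of each $\ell_i$ it ensures that $\theta^{(t)}$ is close enough to $\theta^{(t-1)}$ that the proxy $\tilde\alpha^{(t)}$---which was computed at the previous iterate---remains a faithful approximation of $\alpha^{(t)}$ at the current parameter, so the drift estimate on $\epsilon^{(t)}$ is not spoilt by the change of linearization point.
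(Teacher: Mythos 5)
Your plan is essentially the paper's proof: both first pull out the $-\mathbb{E}\bigl[\|\sum_i\alpha_i^{(t)}\nabla f_i(\theta^{(t)})\|_2^2\bigr]$ term and then control the remaining weight--noise cross term by centering $\alpha^{(t)}$ at a conditionally deterministic reference --- the paper centers at $\mathbb{E}[\alpha_i^{(t)}]$ and immediately bounds the deviation by the one-step multiplicative drift $(\exp(\eta_\alpha^{(t)}\cE_i(\theta^{(t)}))-1)\alpha_i^{(t)}\le 2\eta_\alpha^{(t)}B_\ell$, which is exactly your $\tilde\alpha^{(t)}=\alpha^{(t-1)}$ proxy in disguise. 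The paper's three-term $A/B/C$ split versus your four-term expansion of $(\tilde\alpha+\epsilon)\otimes(\tilde\alpha+\epsilon)$ is only a difference in grouping: the vanishing zero-mean piece, the pairing of the $O(m\eta_\alpha^{(t)}B_\ell)$ $\ell_1$-drift with the $\sqrt{m}\sigma$ aggregated noise that produces the $m^{3/2}$, and the AM--GM/Cauchy--Schwarz trade-off (the paper's optimized $\beta^{(t)}$) that absorbs the higher-order-in-$\eta_\alpha$ piece all appear in both arguments.
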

\begin{proof} We first decompose the term into
    \begin{align*}
        & \mathbb{E}_{ \xi}\left[\left(\sum_{i=1}^m  \alpha_{i}^{(t)}\nabla f_i ( \theta^{(t)}) \right)^\top  \left(-\sum_{i=1}^m  \alpha_{i}^{(t)}\nabla F_i ( \theta^{(t)}) \right)\right] \\ & = \mathbb{E}_{ \xi}\left[\left(\sum_{i=1}^m  \alpha_{i}^{(t)}\nabla f_i ( \theta^{(t)}) \right)^\top \left( \sum_{i=1}^m  \alpha_{i}^{(t)}\left(\nabla f_i ( \theta^{(t)}) -\nabla F_i ( \theta^{(t)})\right)\right)\right] - \mathbb{E}_{ \xi}\left[\left\|\sum_{i=1}^m  \alpha_{i}^{(t)}\nabla f_i ( \theta^{(t)})\right\|_2^2 \right].
    \end{align*}
    The first term can potentially corrupt the effectiveness of optimization, and the second term measures the descent value. We next bound the first term. By definitions and decomposition, we get
    \begin{align*}
        & \mathbb{E}_{ \xi} \left[ \left(\sum_{i=1}^m \alpha_{i}^{(t)} \nabla f_i ( \theta^{(t)})\right)^\top \left(\sum_{i=1}^m \alpha_{i}^{(t)} (\nabla f_i ( \theta^{(t)}) -  \nabla F_i( \theta^{(t)}) )\right)\right]\\
        & = \mathbb{E}_{ \xi} \left[ \left(\sum_{i=1}^m \alpha_{i}^{(t)} \nabla f_i ( \theta^{(t)})\right)^\top \left(\sum_{i=1}^m (\alpha_{i}^{(t)} - \mathbb{E}[\alpha_{i}^{(t)}]) (\nabla f_i ( \theta^{(t)}) -  \nabla F_i( \theta^{(t)}))\right)\right] \text{    (term A)}\\
        & \quad \quad  + \mathbb{E}_{ \xi} \left[\left(\sum_{i=1}^m  (\alpha_{i}^{(t)} - \mathbb{E}[\alpha_{i}^{(t)}]) \nabla f_i ( \theta^{(t)})\right)^\top \left(\sum_{i=1}^m \mathbb{E}[\alpha_{i}^{(t)}] (\nabla f_i ( \theta^{(t)}) -  \nabla F_i( \theta^{(t)}))\right)\right] \text{    (term B)}\\
        & \quad \quad  + \mathbb{E}_{ \xi} \left[\left(\sum_{i=1}^m  \mathbb{E}[\alpha_{i}^{(t)}] \nabla f_i ( \theta^{(t)})\right)^\top \left(\sum_{i=1}^m \mathbb{E}[\alpha_{i}^{(t)}] (\nabla f_i ( \theta^{(t)}) -  \nabla F_i( \theta^{(t)}))\right) \right] \text{    (term C)}.
    \end{align*}
    We then bound each term individually. As we know that $\alpha_{i}^{(t)}\in[0,1],\|\nabla f_i( \theta^{(t)}) \|\leq L,\forall i= 1,\ldots,m$. By Cauchy–Schwartz inequality, we further know that for the term A
    \begin{align*}
         \text{term A} & =  \mathbb{E}_{ \xi} \left[ \left(\sum_{i=1}^m \alpha_{i}^{(t)} \nabla f_i ( \theta^{(t)})\right)^\top \left(\sum_{i=1}^m (\alpha_{i}^{(t)} - \mathbb{E}[\alpha_{i}^{(t)}]) (\nabla f_i ( \theta^{(t)}) -  \nabla F_i( \theta^{(t)}))\right)\right]\\
         & \leq \mathbb{E}_{ \xi} \left[\left\|\sum_{i=1}^m \alpha_{i}^{(t)} \nabla f_i ( \theta^{(t)})\right\|_2 \left\|\sum_{i=1}^m (\alpha_{i}^{(t)} - \mathbb{E}[\alpha_{i}^{(t)}]) ( \nabla f_i ( \theta^{(t)}) -  \nabla F_i( \theta^{(t)}))\right\|_2\right]\\
         & \leq \mathbb{E}_{ \xi} \left[\left(\sum_{i=1}^m \alpha_{i}^{(t)} \left\| \nabla f_i ( \theta^{(t)})\right\|_2\right) \left\|\sum_{i=1}^m (\alpha_{i}^{(t)} - \mathbb{E}[\alpha_{i}^{(t)}]) ( \nabla f_i ( \theta^{(t)}) -  \nabla F_i( \theta^{(t)}))\right\|_2\right]\\
         & \leq \mathbb{E}_{ \xi} \left[L \left\|\sum_{i=1}^m (\alpha_{i}^{(t)} - \mathbb{E}[\alpha_{i}^{(t)}]) ( \nabla f_i ( \theta^{(t)}) -  \nabla F_i( \theta^{(t)}))\right\|_2\right]\\
         & \leq L \mathbb{E}_{ \xi} \left[\sum_{i=1}^m |\alpha_{i}^{(t)} - \mathbb{E}[\alpha_{i}^{(t)}]|\|  \nabla f_i ( \theta^{(t)}) -  \nabla F_i( \theta^{(t)})\|_2\right].
    \end{align*}
    By the fact that $ab\leq \frac{1}{2{\beta^{(t)}}} a^2 + \frac{{\beta^{(t)}}}{2} b^2$ for any ${\beta^{(t)}} >0$, and by the linearity of expectation, we can get
    \begin{align*}
        \text{term A} & \leq \frac{L}{2{\beta^{(t)}}} \sum_{i=1}^m \mathbb{E}_{ \xi} \left[|\alpha_{i}^{(t)} - \mathbb{E}[\alpha_{i}^{(t)}]|^2\right] + \frac{ L{\beta^{(t)}}}{2}\sum_{i=1}^m  \mathbb{E}_{ \xi} \left[ \|  \nabla f_i ( \theta^{(t)}) -  \nabla F_i( \theta^{(t)})\|_2^2\right]\\
        & \leq \frac{L}{2{\beta^{(t)}}} \sum_{i=1}^m \mathbb{E}_{ \xi} \left[|\alpha_{i}^{(t)} - \alpha_{i}^{(t+1)}|^2\right]  + \frac{L{\beta^{(t)}}}{2}  m\sigma^2\\
        & \leq \frac{L}{2{\beta^{(t)}}} \sum_{i=1}^m (\exp{(\eta_{\alpha}^{(t)}\cE_i({\theta}^{(t)}))}-1)^2 + \frac{L{\beta^{(t)}}}{2}  m\sigma^2\\
        & \leq \frac{L}{2{\beta^{(t)}}} \sum_{i=1}^m (2\eta_{\alpha}^{(t)}\cE_i({\theta}^{(t)}))^2 + \frac{L{\beta^{(t)}}}{2}  m\sigma^2\\
        & \leq \frac{2L}{{\beta^{(t)}}} m {\eta_{\alpha}^{(t)}}^2 B_{\ell}^2 + \frac{L{\beta^{(t)}}}{2}  m\sigma^2.
    \end{align*}
    The third inequality is by the fact that $|\alpha_{i}^{(t)} - \alpha_{i}^{(t+1)}|\leq |\alpha_{i}^{(t)} - \exp{(\eta_{\alpha}^{(t)}\cE_i({\theta}^{(t)}))}\alpha_{i}^{(t)}|\leq (\exp{(\eta_{\alpha}^{(t)}\cE_i({\theta}^{(t)}))}-1)\alpha_{i}^{(t)}$ and $\alpha_{i}^{(t)} \in [0,1]$. The fourth inequality is because $\exp(x) \leq 2x + 1$ when $x\in [0,1]$, and we know that $\eta_{\alpha}^{(t)}\cE_i({\theta}^{(t)} \in [0,1]$.
    By setting ${\beta^{(t)}} = 2{\eta_{\alpha}^{(t)}} B_{\ell}/\sigma$, denote $\mathbb{V}[ \mathbf{\alpha}^{(t)}] = \sum_{i=1}^m \mathbb{E}_{ \xi} \left[|\alpha_{i}^{(t)} - \mathbb{E}[\alpha_{i}^{(t)}]|^2\right]$, we have
    \begin{align*}
        \text{term A} \leq 2 m L \sigma {\eta_{\alpha}^{(t)}} B_{\ell}.
    \end{align*}
    With similar tricks, we have for the term B
    \begin{align*}
        \text{term B} &= \mathbb{E}_{ \xi} \left[\left(\sum_{i=1}^m  (\alpha_{i}^{(t)} - \mathbb{E}[\alpha_{i}^{(t)}]) \nabla f_i ( \theta^{(t)})\right)^\top \left(\sum_{i=1}^m \mathbb{E}[\alpha_{i}^{(t)}] ( \nabla f_i ( \theta^{(t)}) -  \nabla F_i( \theta^{(t)}))\right)\right]\\
        & \leq \mathbb{E}_{ \xi} \left[\left\|\sum_{i=1}^m  (\alpha_{i}^{(t)} - \mathbb{E}[\alpha_{i}^{(t)}]) \nabla f_i ( \theta^{(t)})\right\|_2 \left\|\sum_{i=1}^m \mathbb{E}[\alpha_{i}^{(t)}] ( \nabla f_i ( \theta^{(t)}) -  \nabla F_i( \theta^{(t)}))\right\|_2\right]\\
        &  {\leq \mathbb{E}_{ \xi} \left[\left(\sum_{i=1}^m L |\alpha_{i}^{(t)} - \mathbb{E}[\alpha_{i}^{(t)}]|\right) \left\|\sum_{i=1}^m \mathbb{E}[\alpha_{i}^{(t)}] ( \nabla f_i ( \theta^{(t)}) -  \nabla F_i( \theta^{(t)}))\right\|_2\right]}\\
        & = \mathbb{E}_{ \xi} \left[\sum_{i=1}^m L |\alpha_{i}^{(t)} - \mathbb{E}[\alpha_{i}^{(t)}]| \left\|\sum_{j=1}^m \mathbb{E}[\alpha_{j}^{(t)} ] ( \nabla f^j ( \theta^{(t)}) -  F_j( \theta^{(t)}))\right\|_2\right]\\
        & \leq L \mathbb{E}_{ \xi} \left[\left(\sum_{i=1}^m \frac{1}{2{\beta^{(t)}}}|\alpha_{i}^{(t)} - \mathbb{E}[\alpha_{i}^{(t)}]|^2 + \frac{{\beta^{(t)}} }{2}\left\|\sum_{i=1}^m \mathbb{E}[\alpha_{i}^{(t)}] ( \nabla f_i ( \theta^{(t)}) -  \nabla F_i( \theta^{(t)}))\right\|_2^2\right)\right]\\
        & \leq L \mathbb{E}_{ \xi} \left[\left(\sum_{i=1}^m \frac{1}{2{\beta^{(t)}}}|\alpha_{i}^{(t)} - \mathbb{E}[\alpha_{i}^{(t)}]|^2 + \frac{{\beta^{(t)}} }{2}(\sum_{i=1}^m \mathbb{E}[\alpha_{i}^{(t)}]^2)(\sum_{i=1}^m\left\|\nabla f_i ( \theta^{(t)}) -  \nabla F_i( \theta^{(t)})\right\|_2^2)\right)\right]\\
        & \leq \frac{L}{2{\beta^{(t)}}} \sum_{i=1}^m  \mathbb{E}_{ \xi} \left[|\alpha_{i}^{(t)} - \mathbb{E}[\alpha_{i}^{(t)}]|^2\right] +  \frac{mL{\beta^{(t)}} }{2}\sum_{i=1}^m\mathbb{E}_{ \xi} \left[ \|  \nabla f_i ( \theta^{(t)}) -  \nabla F_i( \theta^{(t)})\|_2^2\right]\\
        & {= \frac{2L}{{\beta^{(t)}}}  m {\eta_{\alpha}^{(t)}}^2 B_{\ell}^2+  \frac{m^2 \sigma^2 L{\beta^{(t)}} }{2}.}
    \end{align*}
    The first inequality is by Cauchy–Schwarz inequality. The second one is by the triangle inequality of $l_2$ norm, we have $\|\sum_{i=1}^m  (\alpha_{i}^{(t)} - \mathbb{E}[\alpha_{i}^{(t)}]) \nabla f_i ( \theta^{(t)})\|_2 \leq \sum_{i=1}^m\|  (\alpha_{i}^{(t)} - \mathbb{E}[\alpha_{i}^{(t)}]) \nabla f_i ( \theta^{(t)})\|_2 = \sum_{i=1}^m|\alpha_{i}^{(t)} - \mathbb{E}[\alpha_{i}^{(t)}]| \| \nabla f_i ( \theta^{(t)})\|_2$. As we know that $\sum_{i=1}^m|\alpha_{i}^{(t)} - \mathbb{E}[\alpha_{i}^{(t)}]| \| \nabla f_i ( \theta^{(t)})\|_2\leq L\sum_{i=1}^m|\alpha_{i}^{(t)} - \mathbb{E}[\alpha_{i}^{(t)}]|$, the third one is from the fact that $ab \leq \frac{1}{2{\beta^{(t)}}} a^2 + \frac{{\beta^{(t)}}}{2} b^2$. The forth one is also from Cauchy-Schwartz inequality. The last one is by the fact that $ \mathbb{E} [\alpha_{i}^{(t)}] \leq 1$. 
    Further by setting ${\beta^{(t)}} = 2{{\eta_{\alpha}^{(t)}} B_{\ell}}/{{\sigma}\sqrt{m}}$, we have
    \begin{align*}
        \text{term B} \leq 2 m^{3/2} L \sigma {\eta_{\alpha}^{(t)}} B_{\ell}.
    \end{align*}
    For term C, by the fact that only $ \nabla F_i( \theta^{(t)})$ has randomness, we get
    \begin{align*}
        \text{term C} &= \mathbb{E}_{ \xi} \left[\left(\sum_{i=1}^m  \mathbb{E}[\alpha_{i}^{(t)}] \nabla f_i ( \theta^{(t)})\right)^\top \left(\sum_{i=1}^m \mathbb{E}[\alpha_{i}^{(t)}] ( \nabla f_i ( \theta^{(t)}) -  \nabla F_i( \theta^{(t)}))\right) \right] \\
        &= \left(\sum_{i=1}^m  \mathbb{E}[\alpha_{i}^{(t)}] \nabla f_i ( \theta^{(t)})\right)^\top \left(\sum_{i=1}^m \mathbb{E}[\alpha_{i}^{(t)}] \left(\nabla f_i ( \theta^{(t)}) -\mathbb{E}_{ \xi}[ \nabla F_i( \theta^{(t)})] \right)\right) =0.
    \end{align*}
    By summing up the above results, we obtain
    \begin{align*}
        \mathbb{E}_{ \xi} \left[ \left(\sum_{i=1}^m \alpha_{i}^{(t)} \nabla f_i ( \theta^{(t)})\right)^\top \left( \sum_{i=1}^m  \alpha_{i}^{(t)}\nabla F_i ( \theta^{(t)})  - \sum_{i=1}^m \alpha_{i}^{(t)} \nabla f_i ( \theta^{(t)})\right)\right]  \leq 4 m^{3/2} L \sigma {\eta_{\alpha}^{(t)}} B_{\ell}.
    \end{align*}
    Plug in the first decomposition in the beginning, we finally get
    \begin{align*}
        & \mathbb{E}_{ \xi}\left[\left(\sum_{i=1}^m  \alpha_{i}^{(t)}\nabla f_i ( \theta^{(t)}) \right)^\top  \left(-\sum_{i=1}^m  \alpha_{i}^{(t)}\nabla F_i ( \theta^{(t)}) \right)\right]  \leq 4 m^{3/2} L \sigma {\eta_{\alpha}^{(t)}} B_{\ell} - \mathbb{E}_{ \xi}\left[\left\|\sum_{i=1}^m  \alpha_{i}^{(t)}\nabla f_i ( \theta^{(t)})\right\|_2^2 \right].
    \end{align*}
\end{proof}

\begin{lemma}\label{lem:second_bias} Under the same assumption in Theorem~\ref{thm:convergence_nonconvex}, select nonincreasing ${\eta_{\theta}^{(t)}} \leq \min\{1/B_\ell,1/Gm\}$, we have the following inequality
    \begin{align*}
        \mathbb{E}_{ \xi} \left[\| \sum_{i=1}^m  \alpha_{i}^{(t)}\nabla F_i ( \theta^{(t)}) \|_2^2 - \|\sum_{i=1}^m \alpha_{i}^{(t)} \nabla f_i ( \theta^{(t)})\|_2^2\right] \leq m\sigma^2 + 4 m^{3/2} L \sigma {\eta_{\alpha}^{(t)}} B_{\ell}.
    \end{align*}
\end{lemma}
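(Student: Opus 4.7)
The plan is to decompose $\|A\|_2^2 - \|B\|_2^2$ into a stochastic-variance piece and a cross-correlation piece, where $A := \sum_{i=1}^m \alpha_i^{(t)} \nabla F_i(\theta^{(t)})$ and $B := \sum_{i=1}^m \alpha_i^{(t)} \nabla f_i(\theta^{(t)})$. Using the elementary identity $\|A\|_2^2 - \|B\|_2^2 = \|A-B\|_2^2 + 2\langle A-B, B\rangle$ and taking expectation over $\xi$ yields
\begin{align*}
\mathbb{E}_\xi\bigl[\|A\|_2^2 - \|B\|_2^2\bigr] \;=\; \mathbb{E}_\xi\bigl[\|A-B\|_2^2\bigr] \;+\; 2\,\mathbb{E}_\xi\bigl[\langle A-B, B\rangle\bigr],
\end{align*}
so the goal reduces to showing that the first term is at most $m\sigma^2$ and the second at most $4 m^{3/2} L \sigma \eta_\alpha^{(t)} B_\ell$.

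For the variance piece, note $A - B = \sum_i \alpha_i^{(t)}(\nabla F_i - \nabla f_i)$ and $\alpha^{(t)} \in \Delta_m$. Jensen's inequality on $\|\cdot\|_2^2$ (equivalently Cauchy--Schwarz together with $\sum_i \alpha_i^{(t)} = 1$) gives $\|A - B\|_2^2 \leq \sum_i \alpha_i^{(t)} \|\nabla F_i - \nabla f_i\|_2^2 \leq \sum_i \|\nabla F_i - \nabla f_i\|_2^2$ since each $\alpha_i^{(t)} \leq 1$, and the $L_2$ variance bound $\mathbb{E}_\xi[\|\nabla F_i - \nabla f_i\|_2^2] \leq \sigma^2$ (the same hypothesis on $\sigma$ that was implicitly invoked inside the proof of \Cref{lem:dot}) then yields $\mathbb{E}_\xi[\|A-B\|_2^2] \leq m\sigma^2$. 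For the cross piece, I reuse the three-term splitting from \Cref{lem:dot}: write each $\alpha_i^{(t)} = (\alpha_i^{(t)} - \mathbb{E}[\alpha_i^{(t)}]) + \mathbb{E}[\alpha_i^{(t)}]$ in both factors of $\langle A-B, B\rangle$, producing one ``mean--mean'' contribution that vanishes because $\mathbb{E}_\xi[\nabla F_i - \nabla f_i] = 0$, and two residual contributions controlled via Cauchy--Schwarz, Young's inequality $ab \leq \tfrac{1}{2\beta} a^2 + \tfrac{\beta}{2} b^2$ with a tuned $\beta$, and the multiplicative-update estimate $|\alpha_i^{(t+1)} - \alpha_i^{(t)}| \leq 2 \eta_\alpha^{(t)} \cE_i(\theta^{(t)}) \alpha_i^{(t)}$ that caps $\mathbb{V}[\alpha^{(t)}]$ by $m(\eta_\alpha^{(t)} B_\ell)^2$. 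This reproduces exactly the estimates used for ``term A'' and ``term B'' in \Cref{lem:dot} and delivers $2\mathbb{E}_\xi[\langle A-B, B\rangle] \leq 4m^{3/2} L \sigma \eta_\alpha^{(t)} B_\ell$.

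The main obstacle is that \Cref{lem:dot} is stated as a one-sided inequality on $\mathbb{E}_\xi[\langle B, -A\rangle] + \mathbb{E}[\|B\|_2^2]$, whereas I need an upper bound on $\mathbb{E}_\xi[\langle A-B, B\rangle] = \mathbb{E}_\xi[\langle B, A\rangle] - \mathbb{E}[\|B\|_2^2]$, i.e.\ the reverse direction of the same quantity. Fortunately, inspecting the proof of \Cref{lem:dot} resolves this cleanly: every non-vanishing sub-term there was bounded via absolute-value inequalities (Cauchy--Schwarz followed by Young), so the argument is genuinely two-sided and supplies the required upper bound for free. Aside from this re-reading, the computation is a parallel repetition of what \Cref{lem:dot} already establishes, and summing the two pieces yields the claimed inequality.
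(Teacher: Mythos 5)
Your proof is correct and takes essentially the same route as the paper: the identity $\|A\|_2^2-\|B\|_2^2=\|A-B\|_2^2+2\langle A-B,B\rangle$, the $m\sigma^2$ bound on the variance piece via Cauchy--Schwarz/Jensen, and a reuse of the three-term splitting from Lemma~\ref{lem:dot} for the cross term, including the observation that those Cauchy--Schwarz/Young estimates are two-sided (the paper invokes exactly this with the remark that ``the minus sign does not affect the inequality''). Nothing further to add.
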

\begin{proof} We first decompose the expectation into
    \begin{align*}
        & \mathbb{E}_{ \xi} \left[\| \sum_{i=1}^m  \alpha_{i}^{(t)}\nabla F_i ( \theta^{(t)}) \|_2^2 - \|\sum_{i=1}^m \alpha_{i}^{(t)} \nabla f_i ( \theta^{(t)})\|_2^2\right]  \\
        & = \mathbb{E}_{ \xi} \left[\| \sum_{i=1}^m  \alpha_{i}^{(t)}\nabla F_i ( \theta^{(t)})  +\sum_{i=1}^m \alpha_{i}^{(t)} \nabla f_i ( \theta^{(t)})-\sum_{i=1}^m \alpha_{i}^{(t)} \nabla f_i ( \theta^{(t)}) \|_2^2 - \| {\sum_{i=1}^m}\alpha_{i}^{(t)} \nabla f_i ( \theta^{(t)})\|_2^2\right]\\
        & = \underbrace{\mathbb{E}_{ \xi} \left[\| \sum_{i=1}^m  \alpha_{i}^{(t)}\nabla F_i ( \theta^{(t)})  +\sum_{i=1}^m \alpha_{i}^{(t)} \nabla f_i ( \theta^{(t)})\|_2^2\right]}_{\text{term A}} \\
        & \quad - \underbrace{ 2\mathbb{E}_{ \xi} \left[\left(\sum_{i=1}^m \alpha_{i}^{(t)} \nabla f_i ( \theta^{(t)})\right)^\top \left( \left(\sum_{i=1}^m  \alpha_{i}^{(t)}\nabla F_i ( \theta^{(t)}) \right) +\sum_{i=1}^m \alpha_{i}^{(t)} \nabla f_i ( \theta^{(t)})\right)\right]}_{\text{term B}}.
    \end{align*}
    We next analyze the term A. By the triangle inequality of the $l_2$ norm, we have
    \begin{align*}
        \text{term A} & = \mathbb{E}_{ \xi}\left[\left\|\sum_{i=1}^m  \alpha_{i}^{(t)}\nabla f_i ( \theta^{(t)})  +  \left(\sum_{i=1}^m  \alpha_{i}^{(t)}\nabla F_i ( \theta^{(t)}) \right)\right\|_2^2\right] = \mathbb{E}_{ \xi}\left[\left\|\sum_{i=1}^m  \alpha_{i}^{(t)}(\nabla f_i ( \theta^{(t)})  -  \nabla F_i( \theta^{(t)}))\right\|_2^2\right]\\
        & \leq \mathbb{E}_{ \xi}\left[\left(\sum_{i=1}^m  \alpha_{i}^{(t)} \left\|(\nabla f_i ( \theta^{(t)})  -  \nabla F_i( \theta^{(t)}))\right\|_2\right)^2\right],
    \end{align*}
    Further, by the fact that  {$\sum_{i=1}^m \alpha_{i}^{(t)} =1, \alpha_{i}^{(t)} \in [0,1],i=1,\ldots,m$}, we know that
    \begin{align*}
        \text{term A} &\leq  \mathbb{E}_{ \xi}\left[\left(\sum_{i=1}^m{{\alpha_{i}^{(t)}}^2}\right) \left(\sum_{i=1}^m\left\|\nabla f_i ( \theta^{(t)})  -  \nabla F_i( \theta^{(t)})\right\|^2_2\right)\right]\\
        & \leq \mathbb{E}_{ \xi}\left[\sum_{i=1}^m \left\|(\nabla f_i ( \theta^{(t)})  -  \nabla F_i( \theta^{(t)}))\right\|^2\right]  = m\sigma^2.
    \end{align*}
    We then analyze the term B. From the proof in Lemma~\ref{lem:dot}, and by the fact that the minus sign does not affect the inequality, we know that
    \begin{align*}
        \text{term B} &= - 2\mathbb{E}_{ \xi} \left[ \left(\sum_{i=1}^m \alpha_{i}^{(t)} \nabla f_i ( \theta^{(t)})\right)^\top \left( \left(\sum_{i=1}^m  \alpha_{i}^{(t)}\nabla F_i ( \theta^{(t)}) \right) +\sum_{i=1}^m \alpha_{i}^{(t)} \nabla f_i ( \theta^{(t)})\right)\right]  \\
        & \leq 4 m^{3/2} L \sigma {\eta_{\alpha}^{(t)}} B_{\ell} . 
    \end{align*}
    Combining all the results, we obtain
    \begin{align*}
        \mathbb{E}_{ \xi} \left[\| \left(\sum_{i=1}^m  \alpha_{i}^{(t)}\nabla F_i ( \theta^{(t)}) \right)\|_2^2 - \|\sum_{i=1}^m \alpha_{i}^{(t)} \nabla f_i ( \theta^{(t)})\|_2^2\right] \leq m\sigma^2 + 4 m^{3/2} L \sigma {\eta_{\alpha}^{(t)}} B_{\ell}.
    \end{align*}
\end{proof}

\begin{lemma}\label{lem:nonconvex_sequence}Under the same assumption as Theorem~\ref{thm:convergence_nonconvex}, select nonincreasing ${\eta_{\theta}^{(t)}} \leq \min\{1/B_\ell,1/Gm\}$, we have the following inequality
    \begin{align*}
        & \frac{{\eta_{\theta}^{(t)}}}{2}\mathbb{E}_{ \xi}\left[\left\|\sum_{i=1}^m  \alpha_{i}^{(t)}\nabla f_i (\theta^{(t)})\right\|_2^2 \right] \\
        & \leq \mathbb{E}_{ \xi} [ \sum_{i=1}^m\alpha_{i}^{(t)} f_i(\theta^{(t)})-\sum_{i=1}^m\alpha_{i}^{(t)} f_i(\theta^{(t+1)})] + 6 m^{3/2} L \sigma {\eta_{\alpha}^{(t)}}{\eta_{\theta}^{(t)}} B_{\ell}  + \frac{G{\eta_{\theta}^{(t)}}^2}{2} m\sigma^2.
    \end{align*}
\end{lemma}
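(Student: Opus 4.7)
The target inequality is a weighted analog of the classical one-step descent lemma for nonconvex SGD, so the natural strategy is to apply $G$-smoothness of the weighted loss, substitute the stochastic update rule, take expectations, and plug in the two preceding lemmas to absorb the noise contributions.

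First I would observe that since $\alpha^{(t)} \in \Delta_m$ is a fixed convex combination and each $\ell_i$ is $G$-smooth, the combined loss $\sum_i \alpha_i^{(t)} f_i(\cdot)$ is itself $G$-smooth. Applying the standard quadratic upper bound from smoothness at $\theta^{(t)}$ and $\theta^{(t+1)}$ gives
\begin{align*}
 \sum_{i=1}^m \alpha_i^{(t)} f_i(\theta^{(t+1)}) - \sum_{i=1}^m \alpha_i^{(t)} f_i(\theta^{(t)})
 \le \Bigl\langle \sum_{i=1}^m \alpha_i^{(t)} \nabla f_i(\theta^{(t)}),\, \theta^{(t+1)} - \theta^{(t)} \Bigr\rangle + \frac{G}{2} \bigl\| \theta^{(t+1)} - \theta^{(t)} \bigr\|_2^2.
\end{align*}
The SGD update $\theta^{(t+1)} = \theta^{(t)} - \eta_\theta^{(t)} \sum_i \alpha_i^{(t)} \nabla F_i(\theta^{(t)})$ is substituted into both terms on the right.

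Next, I take expectation over $\xi$ on both sides. The linear inner product term becomes $-\eta_\theta^{(t)}$ times the quantity bounded in Lemma~\ref{lem:dot}, which contributes $\eta_\theta^{(t)} \bigl(4 m^{3/2} L \sigma \eta_\alpha^{(t)} B_\ell - \mathbb{E}\|\sum_i \alpha_i^{(t)} \nabla f_i(\theta^{(t)})\|_2^2\bigr)$; the squared norm term becomes $\frac{G (\eta_\theta^{(t)})^2}{2} \mathbb{E}\|\sum_i \alpha_i^{(t)} \nabla F_i(\theta^{(t)})\|_2^2$, which by Lemma~\ref{lem:second_bias} is at most $\frac{G (\eta_\theta^{(t)})^2}{2} \bigl( \mathbb{E}\|\sum_i \alpha_i^{(t)} \nabla f_i(\theta^{(t)})\|_2^2 + m\sigma^2 + 4 m^{3/2} L \sigma \eta_\alpha^{(t)} B_\ell \bigr)$. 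After rearranging so that the negative squared-gradient term is on the left, the coefficient of $\mathbb{E}\|\sum_i \alpha_i^{(t)} \nabla f_i(\theta^{(t)})\|_2^2$ is $\eta_\theta^{(t)} - \tfrac{G(\eta_\theta^{(t)})^2}{2}$.

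The main (and essentially the only) delicate step is using the step-size condition $\eta_\theta^{(t)} \le 1/(Gm)$ to simplify this coefficient and to consolidate the two $L\sigma\eta_\alpha^{(t)} B_\ell$ contributions into a single constant. Specifically, $G\eta_\theta^{(t)} \le 1/m \le 1$ implies $\eta_\theta^{(t)} - \tfrac{G(\eta_\theta^{(t)})^2}{2} \ge \tfrac{\eta_\theta^{(t)}}{2}$, so the left-hand side becomes exactly $\tfrac{\eta_\theta^{(t)}}{2} \mathbb{E}\|\sum_i \alpha_i^{(t)} \nabla f_i(\theta^{(t)})\|_2^2$. The residual noise terms are $4 m^{3/2} L \sigma \eta_\alpha^{(t)} \eta_\theta^{(t)} B_\ell$ from Lemma~\ref{lem:dot}, plus $\tfrac{G(\eta_\theta^{(t)})^2}{2} \cdot 4 m^{3/2} L \sigma \eta_\alpha^{(t)} B_\ell$ from Lemma~\ref{lem:second_bias}; the latter is bounded using $G\eta_\theta^{(t)} \le 1$ by $2 m^{3/2} L \sigma \eta_\alpha^{(t)} \eta_\theta^{(t)} B_\ell$, giving the combined constant $6 m^{3/2} L \sigma \eta_\alpha^{(t)} \eta_\theta^{(t)} B_\ell$ in the claim. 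The remaining $\tfrac{G(\eta_\theta^{(t)})^2}{2} m\sigma^2$ term carries over unchanged, completing the inequality.

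I do not expect any genuine obstacle: once the two preceding lemmas are invoked, the proof is just bookkeeping around the $G$-smoothness descent lemma. The only step requiring care is verifying that the step-size condition $\eta_\theta^{(t)} \le 1/(Gm)$ is enough both to guarantee $\eta_\theta^{(t)} - \tfrac{G(\eta_\theta^{(t)})^2}{2} \ge \eta_\theta^{(t)}/2$ and to convert the extra $G(\eta_\theta^{(t)})^2$ noise term into a multiple of $\eta_\theta^{(t)}$, so that the final coefficient of the $L\sigma$ term is $6$ rather than something depending on $G$ or $m$.
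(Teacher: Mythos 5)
Your proposal is correct and follows essentially the same route as the paper's proof: apply the $G$-smoothness descent inequality to the $\alpha^{(t)}$-weighted loss, substitute the stochastic update, bound the inner-product term via Lemma~\ref{lem:dot} and the squared stochastic gradient via Lemma~\ref{lem:second_bias}, then use $G\eta_\theta^{(t)}\le 1$ to get the $\eta_\theta^{(t)}/2$ coefficient and consolidate the noise terms into $6m^{3/2}L\sigma\eta_\alpha^{(t)}\eta_\theta^{(t)}B_\ell$. The only cosmetic differences are that the paper writes the smoothness bound per task and sums (using $\sum_i\alpha_i^{(t)}\le 1$), and absorbs the extra $2m^{3/2}LG\sigma(\eta_\theta^{(t)})^2\eta_\alpha^{(t)}B_\ell$ term via $G\eta_\theta^{(t)}\le 1/m$ rather than $G\eta_\theta^{(t)}\le 1$; both yield the same final constant.
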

\begin{proof}
    From the $G$-smoothness of each objective function, we have
    \begin{align*}
        \alpha_{i}^{(t)} f_i(\theta^{(t+1)}) \leq \alpha_{i}^{(t)} \left( f_i(\theta^{(t)}) + {\eta_{\theta}^{(t)}} \nabla f_i (\theta^{(t)})^\top  \left(\sum_{i=1}^m  \alpha_{i}^{(t)}\nabla F_i ( \theta^{(t)}) \right) + \frac{G{\eta_{\theta}^{(t)}}^2}{2}\| \sum_{i=1}^m  \alpha_{i}^{(t)}\nabla F_i ( \theta^{(t)}) \|_2^2\right).
    \end{align*}
    Sum up both side for $i=1,\ldots,m$, and take the expectation on random variable $ \xi$, we can get
    \begin{align*}
        & \mathbb{E}_{ \xi} [\sum_{i=1}^m\alpha_{i}^{(t)} f_i(\theta^{(t+1)}) - \sum_{i=1}^m \alpha_{i}^{(t)} f_i(\theta^{(t)})]  \\
        & \leq {\eta_{\theta}^{(t)}} \mathbb{E}_{ \xi} \left[  \left(\sum_{i=1}^m\alpha_{i}^{(t)}\nabla f_i (\theta^{(t)})\right)^\top  \left(\sum_{i=1}^m  \alpha_{i}^{(t)}\nabla F_i ( \theta^{(t)}) \right)\right] + \frac{G{\eta_{\theta}^{(t)}}^2}{2}\mathbb{E}_{ \xi} \left[\sum_{i=1}^m \alpha_{i}^{(t)}\| \sum_{i=1}^m  \alpha_{i}^{(t)}\nabla F_i ( \theta^{(t)}) \|_2^2\right]\\
        & \leq {\eta_{\theta}^{(t)}} \mathbb{E}_{ \xi} \left[  \left(\sum_{i=1}^m\alpha_{i}^{(t)}\nabla f_i (\theta^{(t)})\right)^\top  \left(\sum_{i=1}^m  \alpha_{i}^{(t)}\nabla F_i ( \theta^{(t)}) \right)\right] + \frac{G{\eta_{\theta}^{(t)}}^2}{2}\mathbb{E}_{ \xi} \left[\| \sum_{i=1}^m  \alpha_{i}^{(t)}\nabla F_i ( \theta^{(t)}) \|_2^2\right].
    \end{align*}
    The last inequality is by the Assumption that $\sum_{i=1}^m\alpha_{i}^{(t)}\leq 1$. From the result of Lemma~\ref{lem:dot}, we can bound the first term and obtain
    \begin{align*}
        & \mathbb{E}_{ \xi} [\sum_{i=1}^m\alpha_{i}^{(t)} f_i(\theta^{(t+1)}) - \sum_{i=1}^m\alpha_{i}^{(t)} f_i(\theta^{(t)})] \\ 
        & \leq 4 m^{3/2} L \sigma {\eta_{\alpha}^{(t)}}{\eta_{\theta}^{(t)}} B_{\ell} - {\eta_{\theta}^{(t)}}\mathbb{E}_{ \xi}\left[\left\|\sum_{i=1}^m  \alpha_{i}^{(t)}\nabla f_i (\theta^{(t)})\right\|_2^2 \right] + \frac{G{\eta_{\theta}^{(t)}}^2}{2}\mathbb{E}_{ \xi} \left[\| \sum_{i=1}^m  \alpha_{i}^{(t)}\nabla F_i ( \theta^{(t)}) \|_2^2\right].
    \end{align*}
    Then, adopting the result from Lemma~\ref{lem:second_bias}, we know that
    \begin{align*}
        &  \mathbb{E}_{ \xi} [\sum_{i=1}^m\alpha_{i}^{(t)} f_i(\theta^{(t+1)}) - \sum_{i=1}^m\alpha_{i}^{(t)} f_i(\theta^{(t)})]  \leq 4 m^{3/2} L \sigma {\eta_{\alpha}^{(t)}} {\eta_{\theta}^{(t)}} B_{\ell}  \\
        & \quad\quad + \left(\frac{G{\eta_{\theta}^{(t)}}^2}{2}- {\eta_{\theta}^{(t)}}\right)\mathbb{E}_{ \xi}\left[\left\|\sum_{i=1}^m  \alpha_{i}^{(t)}\nabla f_i (\theta^{(t)})\right\|_2^2 \right]+ \frac{G{\eta_{\theta}^{(t)}}^2}{2} m\sigma^2 + 2 m^{3/2} LG \sigma {\eta_{\theta}^{(t)}}^2 {\eta_{\alpha}^{(t)}} B_{\ell}.
    \end{align*}
    By the fact that ${\eta_{\theta}^{(t)}} \leq 1/Gm$, we further have
    \begin{align*}
        \mathbb{E}_{ \xi} [\sum_{i=1}^m\alpha_{i}^{(t)} f_i(\theta^{(t+1)}) - \sum_{i=1}^m\alpha_{i}^{(t)} f_i(\theta^{(t)})] & \leq 4 m^{3/2} L \sigma {\eta_{\theta}^{(t)}} {\eta_{\alpha}^{(t)}} B_{\ell} + 2 m^{1/2} L \sigma {\eta_{\alpha}^{(t)}}{\eta_{\theta}^{(t)}} B_{\ell} \\
        & \quad - \frac{\eta_{\theta}^{(t)}}{2}\mathbb{E}_{ \xi}\left[\left\|\sum_{i=1}^m  \alpha_{i}^{(t)}\nabla f_i (\theta^{(t)})\right\|_2^2 \right]+ \frac{G{\eta_{\theta}^{(t)}}^2}{2} m\sigma^2  .
    \end{align*}
    By rearrangement, we therefore have
    \begin{align*}
        & \frac{{\eta_{\theta}^{(t)}}}{2}\mathbb{E}_{ \xi}\left[\left\|\sum_{i=1}^m  \alpha_{i}^{(t)}\nabla f_i (\theta^{(t)})\right\|_2^2 \right] \\
        & \leq \mathbb{E}_{ \xi} [ \sum_{i=1}^m\alpha_{i}^{(t)} f_i(\theta^{(t)})-\sum_{i=1}^m\alpha_{i}^{(t)} f_i(\theta^{(t+1)})] + 6 m^{3/2} L \sigma {\eta_{\alpha}^{(t)}}{\eta_{\theta}^{(t)}} B_{\ell}  + \frac{G{\eta_{\theta}^{(t)}}^2}{2} m\sigma^2.
    \end{align*}
\end{proof}

\begin{lemma}\label{lem:nonconvexsummation} Under the same assumption with Theorem~\ref{thm:convergence_nonconvex}, select nonincreasing ${\eta_{\theta}^{(t)}} \leq 1/Gm$, we have
    \begin{align*}
        & \sum_{t=1}^T\frac{1}{{\eta_{\theta}^{(t)}}}\mathbb{E}[ \sum_{i=1}^m\alpha_{i}^{(t)} f_i(\theta^{(t)})-\sum_{i=1}^m\alpha_{i}^{(t)} f_i(\theta^{(t+1)})]  \leq \sum_{t=2}^T \frac{2 m {\eta_{\alpha}^{(t)}} B_{\ell}^2}{{\eta_{\theta}^{(t)}}} + \frac{2mB_{\ell}}{{\eta_{\theta}^{(T)}}}.
    \end{align*}
\end{lemma}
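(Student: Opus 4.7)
The obstacle that prevents a naive telescoping is that the weights $\alpha_i^{(t)}$ appearing in the two pieces of the $t$-th summand are the \emph{same} weights, so the sum does not directly collapse across adjacent time indices. I would therefore treat this as an Abel (summation by parts) argument on the sequence $S_t \coloneqq \sum_i \alpha_i^{(t)} f_i(\theta^{(t)})$, paying for the mismatch between consecutive $(\alpha^{(t)}, \eta_\theta^{(t)})$ pairs using (i) the weight-change estimate already developed inside Lemma~\ref{lem:dot} and (ii) the monotonicity of $\eta_\theta^{(t)}$.

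Concretely, I would start from
\begin{align*}
\sum_{t=1}^T \tfrac{1}{\eta_\theta^{(t)}} \mathbb{E}\!\left[\textstyle\sum_i \alpha_i^{(t)} f_i(\theta^{(t)}) - \sum_i \alpha_i^{(t)} f_i(\theta^{(t+1)})\right],
\end{align*}
and re-index the second group by $s = t+1$ so that only values $f_i(\theta^{(t)})$ for $t=1,\ldots,T{+}1$ appear. After extracting the boundary contributions this yields the identity
\begin{align*}
\tfrac{1}{\eta_\theta^{(1)}}\textstyle\sum_i \alpha_i^{(1)} f_i(\theta^{(1)}) - \tfrac{1}{\eta_\theta^{(T)}}\textstyle\sum_i \alpha_i^{(T)} f_i(\theta^{(T+1)}) + \sum_{t=2}^T \sum_i f_i(\theta^{(t)}) \left( \tfrac{\alpha_i^{(t)}}{\eta_\theta^{(t)}} - \tfrac{\alpha_i^{(t-1)}}{\eta_\theta^{(t-1)}} \right).
\end{align*}
Both boundary terms are bounded in absolute value by $B_\ell / \eta_\theta^{(\cdot)}$ using $|f_i| \le B_\ell$ and $\sum_i \alpha_i^{(\cdot)} = 1$, which after combining with the nonincreasing property of $\eta_\theta^{(t)}$ contributes at most $2 B_\ell / \eta_\theta^{(T)}$; absorbing the factor of $m \ge 2$ (from the problem setup) gives the $2 m B_\ell / \eta_\theta^{(T)}$ piece in the bound.

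The substantive work is the middle sum. I would split each inner term as
\begin{align*}
\tfrac{\alpha_i^{(t)}}{\eta_\theta^{(t)}} - \tfrac{\alpha_i^{(t-1)}}{\eta_\theta^{(t-1)}} = \tfrac{\alpha_i^{(t)} - \alpha_i^{(t-1)}}{\eta_\theta^{(t)}} + \alpha_i^{(t-1)}\!\left( \tfrac{1}{\eta_\theta^{(t)}} - \tfrac{1}{\eta_\theta^{(t-1)}} \right),
\end{align*}
then use $|f_i(\theta^{(t)})| \le B_\ell$ on both pieces. For the first piece I reuse the bound already obtained in Lemma~\ref{lem:dot}, namely $|\alpha_i^{(t)} - \alpha_i^{(t-1)}| \le (\exp(\eta_\alpha^{(t-1)} \hat{\mathcal E}_i^{(t-1)}) - 1)\,\alpha_i^{(t-1)} \le 2\eta_\alpha^{(t-1)} B_\ell \alpha_i^{(t-1)}$; summing over $i$ and using $\sum_i \alpha_i^{(t-1)} = 1$ produces the per-$t$ contribution $2 \eta_\alpha^{(t-1)} B_\ell^2 / \eta_\theta^{(t)}$, which is absorbed into $2 m \eta_\alpha^{(t)} B_\ell^2 / \eta_\theta^{(t)}$. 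For the second piece, $\sum_i \alpha_i^{(t-1)} = 1$ reduces it to $B_\ell(1/\eta_\theta^{(t)} - 1/\eta_\theta^{(t-1)})$, which telescopes to $B_\ell(1/\eta_\theta^{(T)} - 1/\eta_\theta^{(1)}) \le B_\ell / \eta_\theta^{(T)}$, fitting inside the $2 m B_\ell / \eta_\theta^{(T)}$ slack.

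The main obstacle I anticipate is bookkeeping: making the two boundary contributions plus the telescoped step-size gap fit cleanly under the single $2 m B_\ell / \eta_\theta^{(T)}$ term in the target bound. This is why the monotonicity assumption $\eta_\theta^{(t)} \le \eta_\theta^{(t-1)}$ and the $m \ge 2$ convention are both essential; any slack in the step-size-difference estimate has to be charged against $1/\eta_\theta^{(T)}$ rather than against $1/\eta_\theta^{(1)}$. Once these accounting choices are fixed, summing the per-$t$ estimates over $t = 2, \ldots, T$ and adding the two boundary contributions gives exactly the stated inequality.
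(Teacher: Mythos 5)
Your proposal is correct and follows essentially the same route as the paper's proof: an Abel/summation-by-parts re-indexing that isolates the two boundary terms, a split of the middle sum into a weight-change piece (bounded via $|\alpha_i^{(t)}-\alpha_i^{(t-1)}|\le(\exp(\eta_\alpha\mathcal{E}_i)-1)\alpha_i^{(t-1)}\le 2\eta_\alpha B_\ell\alpha_i^{(t-1)}$, exactly as in Lemma~\ref{lem:dot}) and a step-size-change piece that telescopes against the boundary contributions using the monotonicity of $\eta_\theta^{(t)}$. The only differences are cosmetic bookkeeping (you use $\sum_i\alpha_i=1$ where the paper uses the looser $\alpha_i\le 1$ with an extra factor of $m$), so no further comment is needed.
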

\begin{proof}
    First, we can decompose the left side as
    \begin{align*}
        & \sum_{t=1}^T\frac{1}{{\eta_{\theta}^{(t)}}}\mathbb{E} [ \sum_{i=1}^m\alpha_{i}^{(t)} f_i(\theta^{(t)})-\sum_{i=1}^m\alpha_{i}^{(t)} f_i(\theta^{(t+1)})] \\
        & = \sum_{t=2}^T\mathbb{E} [ \frac{1}{{\eta_{\theta}^{(t)}}}\sum_{i=1}^m\alpha_{i}^{(t)} f_i(\theta^{(t)})-\frac{1}{{\eta_{\theta}^{(t-1)}}}\sum_{i=1}^m\alpha_{i}^{(t-1)} f_i(\theta^{(t)})] + \mathbb{E} [ \frac{1}{\eta_\theta^{(1)}}\sum_{i=1}^m\alpha_{i}^{(1)}  f_i(\theta^{(1)})-\frac{1}{{\eta_{\theta}^{(T)}}}\sum_{i=1}^m\alpha_{i}^{(t)}  f_i(\theta^{(T+1)})]
    \end{align*}
    Then we have the following decomposition for the first term
    \begin{align*}
        & \sum_{t=2}^T\mathbb{E} [ \frac{1}{{\eta_{\theta}^{(t)}}}\sum_{i=1}^m\alpha_{i}^{(t)} f_i(\theta^{(t)})-\frac{1}{{\eta_{\theta}^{(t-1)}}}\sum_{i=1}^m\alpha_{i}^{(t-1)} f_i(\theta^{(t)})] \\
        & = \sum_{t=2}^T\mathbb{E} [ \frac{1}{{\eta_{\theta}^{(t)}}}\sum_{i=1}^m\alpha_{i}^{(t)} f_i(\theta^{(t)})- \frac{1}{{\eta_{\theta}^{(t)}}}\sum_{i=1}^m\alpha_{i}^{(t-1)} f_i(\theta^{(t)})+ \frac{1}{{\eta_{\theta}^{(t)}}}\sum_{i=1}^m\alpha_{i}^{(t-1)} f_i(\theta^{(t)})-\frac{1}{{\eta_{\theta}^{(t-1)}}}\sum_{i=1}^m\alpha_{i}^{(t-1)} f_i(\theta^{(t)}) ] \\
        & = \sum_{t=2}^T\mathbb{E} [ \sum_{i=1}^m\frac{1}{{\eta_{\theta}^{(t)}}}(\alpha_{i}^{(t)}-\alpha_{i}^{(t-1)}) f_i(\theta^{(t)})]+\sum_{t=2}^T\mathbb{E} [ (\frac{1}{{\eta_{\theta}^{(t)}}}-\frac{1}{{\eta_{\theta}^{(t-1)}}} )\sum_{i=1}^m\alpha_{i}^{(t-1)} f_i(\theta^{(t)}) ] \\
        & = \sum_{t=2}^T\frac{2 m {\eta_{\alpha}^{(t)}} B_{\ell}}{{\eta_{\theta}^{(t)}}}\mathbb{E} [  f_i(\theta^{(t)})] +\sum_{t=2}^T\mathbb{E} [ (\frac{1}{{\eta_{\theta}^{(t)}}}-\frac{1}{{\eta_{\theta}^{(t-1)}}} )\sum_{i=1}^m\alpha_{i}^{(t-1)} f_i(\theta^{(t)}) ] 
    \end{align*}
    Combining the above, and by the assumption that $|f_i(\theta^{(t)})| \leq B_{\ell}, \alpha_{i}^{(t)}\in [0,1]$ as well as the nonincreasing for ${\eta_{\theta}^{(t)}}$, we therefore have
    \begin{align*}
        & \sum_{t=1}^T\frac{1}{{\eta_{\theta}^{(t)}}}\mathbb{E}[ \sum_{i=1}^m\alpha_{i}^{(t)} f_i(\theta^{(t)})-\sum_{i=1}^m\alpha_{i}^{(t)} f_i(\theta^{(t+1)})] \\
        & \leq \sum_{t=2}^T\frac{2 m {\eta_{\alpha}^{(t)}} B_{\ell}^2}{{\eta_{\theta}^{(t)}}} +\sum_{t=2}^T\mathbb{E} [ (\frac{1}{{\eta_{\theta}^{(t)}}}-\frac{1}{{\eta_{\theta}^{(t-1)}}} )\sum_{i=1}^m\alpha_{i}^{(t-1)} |f_i(\theta^{(t)})| ] \\
        & \quad + \mathbb{E} [ \frac{1}{\eta_\theta^{(1)}}\sum_{i=1}^m\alpha_{i}^{(1)} | f_i(\theta^{(1)})|+\frac{1}{{\eta_{\theta}^{(T)}}}\sum_{i=1}^m\alpha_{i}^{(t)}  |f_i(\theta^{(T+1)})|]\\
        & \leq \sum_{t=2}^T\frac{2 m {\eta_{\alpha}^{(t)}} B_{\ell}^2}{{\eta_{\theta}^{(t)}}} +mB_{\ell}\sum_{t=2}^T\mathbb{E} [ (\frac{1}{{\eta_{\theta}^{(t)}}}-\frac{1}{{\eta_{\theta}^{(t-1)}}} ) + \frac{mB_{\ell}}{\eta_\theta^{(1)}} + \frac{mB_{\ell}}{{\eta_{\theta}^{(T)}}} \\
        & \leq \sum_{t=2}^T \frac{2 m {\eta_{\alpha}^{(t)}} B_{\ell}^2}{{\eta_{\theta}^{(t)}}} + \frac{2mB_{\ell}}{{\eta_{\theta}^{(T)}}}.
    \end{align*}
\end{proof}
Then plug into Lemma~\ref{lem:nonconvex_sequence}, we get
    \begin{align*}
        & \mathbb{E}\left[\left\|\sum_{i=1}^m  \alpha_{i}^{(t)}\nabla f_i (\theta^{(t)})\right\|_2^2 \right]  \\
        & \leq \frac{2}{{\eta_{\theta}^{(t)}}}\mathbb{E} [ \sum_{i=1}^m\alpha_{i}^{(t)} f_i(\theta^{(t)})-\sum_{i=1}^m\alpha_{i}^{(t)} f_i(\theta^{(t+1)})]+ 12 m^{3/2} L \sigma {\eta_{\alpha}^{(t)}} B_{\ell}  + {G{\eta_{\theta}^{(t)}}}m\sigma^2.
    \end{align*}
    The above is divided with ${\eta_{\theta}^{(t)}}/2$ for the both side. Then, sum up the above inequality from $t = 1$ to $t=T$ and take the result from Lemma~\ref{lem:nonconvexsummation}, we have
    \begin{align*}
        &  \sum_{t=1}^T\mathbb{E}\left[\left\|\sum_{i=1}^m  \alpha_{i}^{(t)}\nabla f_i (\theta^{(t)})\right\|_2^2 \right]  \\
        & \leq \sum_{t=1}^T\frac{2}{{\eta_{\theta}^{(t)}}}\mathbb{E} [ \sum_{i=1}^m\alpha_{i}^{(t)} f_i(\theta^{(t)})-\sum_{i=1}^m\alpha_{i}^{(t)} f_i(\theta^{(t+1)})]+ 12 m^{3/2} L \sigma B_{\ell} \sum_{t=1}^T {\eta_{\alpha}^{(t)}}   + Gm\sigma^2\sum_{t=1}^T {\eta_{\theta}^{(t)}}.\\
        & \leq \sum_{t=2}^T \frac{4 m {\eta_{\alpha}^{(t)}} B_{\ell}^2}{{\eta_{\theta}^{(t)}}} + \frac{4mB_{\ell}}{{\eta_{\theta}^{(T)}}}+ 12 m^{3/2} L \sigma B_{\ell} \sum_{t=1}^T {\eta_{\alpha}^{(t)}}   + Gm\sigma^2\sum_{t=1}^T {\eta_{\theta}^{(t)}}.
    \end{align*}
    By selecting $\eta_{\theta}^{(t)} = 2\sqrt{B_\ell}/\sigma \sqrt{TG}$, and $ \eta_{\alpha}^{(t)} \leq \min \{\sqrt{G}/2B_\ell T, \sqrt{mG}/12L\sqrt{B_{\ell}T}\}$. Then average the above inequality, we prove the theorem.

\section{More Experimental Details and Results}\label{supp:exp}

\subsection{Implementation}

We implement our algorithm using hard parameter sharing, where all tasks share a feature extractor and have task-specific heads. For feature extractors, we use a two-layer CNN for MultiMNIST, ResNet-18~\citep{he2016deep} for Office-Home, and SegNet~\citep{badrinarayanan2017segnet} for NYUv2. For task-specific heads, we use two-layer CNNs for NYUv2, and MLP for all other datasets. We standardize all datasets to ensure zero mean and unit variance, as excess risks are sensitive to the scale of tasks. The details are as follows.

It is a well-known fact that overparametrized models can achieve 0 training error even from a dataset with pure random noise. This means that the training loss will always decrease, but not plateau, even if substantial label noise is contained. To address this issue, we use weight decay in the experiments on SARCOS, Office-Home and NYUv2, as we employ overparametrized models on them. We use Adam optimizer and ReLU activation on all datasets. For easier direct comparisons across different model types, we use constant learning rates instead of adaptive ones. The experiments are run on NVIDIA RTX A6000 GPUs.

For all datasets except NYUv2, we use linear layers as task-specific heads. On MultiMNIST, we use a two-layer CNN with kernel size 5 followed by one fully connected layer with 80 hidden units as the feature extractor, trained with learning rate 1e-3. Since the model size is small, we do not apply any regularization. On Office-Home, we use a ResNet 18 (without pretraining) as the shared feature extractor, which is trained using a weight decay of 1e-5. The learning rate is 1e-4. On SARCOS, we use a three-layer MLP with 128 hidden units as the shared feature extract, which is also trained using a weight decay of 1e-2. The learning rate is 1e-3.

On NYUv2, we follow the implementation of~\citet{liu2019end}. Since the dataset have limited data, making it prone to overfitting, we use data augmentation as suggested by \citet{liu2019end}. For the feature extractor, we use the SegNet architecture with four convolutional encoder layers and four convolutional decoder layers. For each of the three tasks, we use two convolutional layers as the task-specific heads. We use a weight decay of 1e-3 and the learning rate is 1e-4.

When performing scale processing as mentioned in \Cref{algo}, directly dividing by the initial excess risk can be unstable since it heavily depends on the initialization of the network, which can be arbitrarily poor. To address this, we deploy a warm-up strategy, where we do not do weight update in the first 3 epochs to collect the average risks over those epochs as an estimation of the initial excess risk.

For the implementation of baselines, we use the code from \citet{lin2023libmtl} and \citet{nashmtl}. 

\subsection{More Results}
\begin{figure}[t!]
    \centering
    \includegraphics[width=\linewidth]{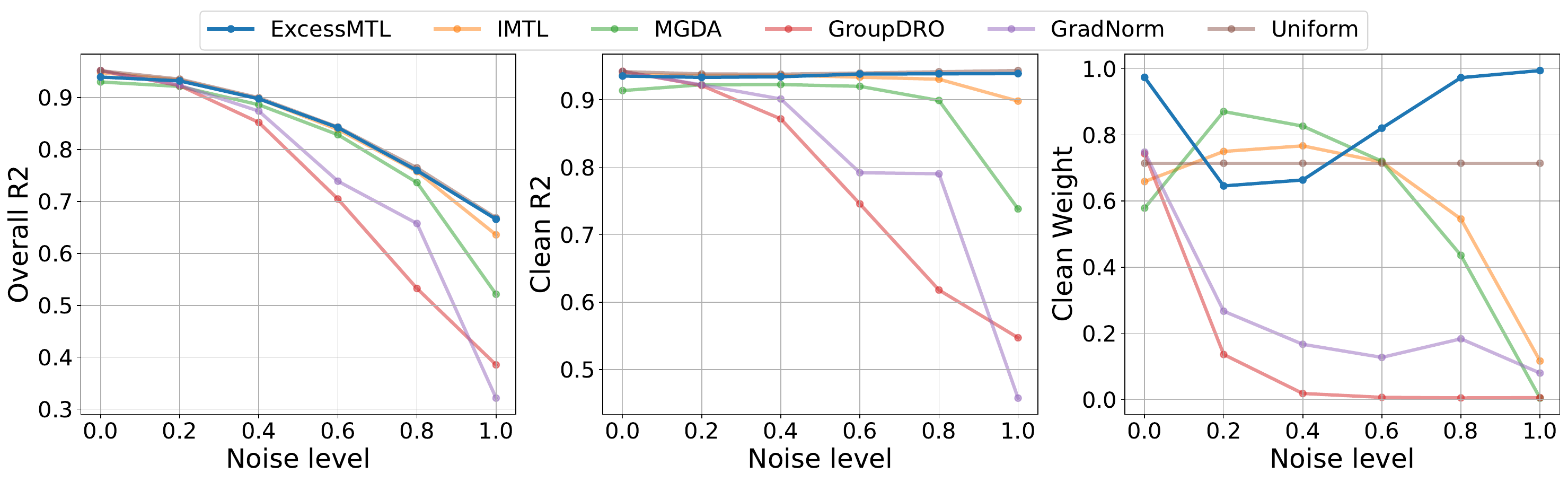}
    \caption{Results on the SARCOS dataset (noise in the first two joints). The left figure considers all 7 tasks, while the other two considers all tasks except the first two tasks. The right figure is the combined weights of all clean tasks (around 0.71 for uniform scalarization). ExcessMTL consistently maintains its performance, significantly outperforming other adaptive methods in face of label noise.}
    \label{fig:sarcos}
\end{figure}

\textbf{SARCOS~\citep{vijayakumar2000locally}} presents an inverse dynamics problem for a robot arm with seven degrees of freedom. The task is to perform multi-target regression that uses 21 attributes (7 joint positions, 7 joint velocities, 7 joint accelerations) to predict the corresponding 7 joint torques. The noise is injected into the first two joint torques.

The results on SARCOS is presented in \Cref{fig:sarcos}. In the face of increasing label noise, all adaptive weighting algorithms except ExcessMTL exhibit a trend of assigning increasing weights to the noisy tasks. This behavior leads to a decline in performance on clean tasks. ExcessMTL demonstrates resilience to label noise, consistently maintaining its performance. Here, a similar pattern to MultiMNIST can be observed that uniform scalarization performs well. However, we want to emphasize again that the performance of uniform scalarization varies across all datasets, and it is not able to produce consistent results universally. In contrast, ExcessMTL demonstrates consistency across diverse datasets, reinforcing its robustness and effectiveness as a reliable solution in noisy environments.

\end{document}